\documentclass[journal]{IEEEtran}

\usepackage{amsmath,amsfonts,amssymb}
\usepackage{amsthm}
\usepackage{array}
\usepackage{booktabs}
\usepackage[caption=false,font=normalsize,labelfont=sf,textfont=sf]{subfig}
\usepackage{textcomp}
\usepackage{graphicx}
\usepackage{url}
\usepackage{verbatim}
\usepackage{stfloats}

\usepackage{algorithm}
\usepackage{algpseudocode}

\usepackage{tikz}
\usetikzlibrary{arrows.meta}

\usepackage{balance}
\usepackage[hidelinks]{hyperref}
\newtheorem{theorem}{Theorem}[section]
\newtheorem{definition}[theorem]{Definition}
\newtheorem{corollary}[theorem]{Corollary}

\newtheorem{remark}[theorem]{Remark}
\newtheorem{conjecture}[theorem]{Conjecture}

\newcommand{\etal}{\textit{et al. }}
\newcommand{\ie}{\textit{i.e. }}
\newcommand{\eg}{\textit{e.g. }}

\def\BibTeX{{\rm B\kern-.05em
  {\sc i\kern-.025em b}\kern-.08em
  T\kern-.1667em\lower.7ex\hbox{E}\kern-.125emX}}

\begin{document}

\title{Amnesia by Design, Memory By Necessity: Persistent State for Document Intelligence}

\author{Souhail~Bakkali~and~Ayoub~Merimi
\thanks{Souhail Bakkali is with the University of Rennes, CNRS, IRISA --- UMR 6074 --- 263 Av. Général Leclerc, Rennes, France.
E-mail: souhail.bakkali@irisa.fr.}
\thanks{Ayoub Merimi is with ENSA Berrechid, Hassan First University,
Settat, Morocco. E-mail: ayoub.merimi@uhp.ma.}
}

\maketitle

\begin{abstract}
Modern Document AI reads contracts, extracts fields, reasons over
tables, and grounds answers to page regions, then forgets
everything. Processing an amendment the next day begins from
scratch: no schema retained, no contradiction detected, no
experience carried forward. This is a structural choice, not a
scale failure: current systems are stateless functions. We call
this the \emph{statelessness bottleneck}. This bottleneck lies
beyond parameter scaling, context extension, and retrieval
augmentation: storage provides persistence and retrieval provides
access, but neither consolidates observations into knowledge that
improves future processing. This survey formalizes \emph{persistent
evidence-grounded document state} as a unifying framework,
specifying the operations and invariants required to convert
multimodal evidence into durable, provenance-linked state. We
introduce a \emph{statefulness audit} showing that ten
representative benchmarks, coded against eight statefulness
criteria, leave cross-session state evolution untested, and derive
a \emph{longitudinal benchmark harness} with five counterfactual
metrics: Experience Gain, Cost Efficiency, Memory Harm, Forgetting
Fidelity, Coverage Retention, to characterize the benefit, cost,
risk, and governability of persistent document state. Document AI
lacks mechanisms coupling persistent state to document-native
structure, provenance, and temporal validity. The next era of
Document AI will be defined by what systems retain across
documents, sessions, and time.
\end{abstract}

\begin{IEEEkeywords}
Document AI, persistent state, knowledge consolidation,
longitudinal evaluation, recursive provenance, machine unlearning.
\end{IEEEkeywords}

\section{Introduction}
\label{sec:intro}

\IEEEPARstart{T}{he} most capable document systems ever built share a property
with the simplest ones: they have no memory. A multimodal foundation model can parse a 200-page financial filing, extract every line item, cross-reference footnotes, and answer a question about a specific table cell on page 147, grounding
its response to the exact bounding box. On established single-page tasks these systems achieve remarkable accuracy, though substantial gaps remain under diverse layouts, long-context reasoning, and unanswerable settings, where even the strongest multimodal foundation models suffer severe performance
degradation~\cite{Deng2025LongDocURL,Masry2025ChartQAPro,wang2024needle}. But the architecture that enables this fluency is the same architecture that prevents it from lasting. The model reads, answers, and resets. The next filing arrives,
and the work begins again from nothing.

This reset is costly because documents carry meaning through their relationship to prior documents. A tax auditor recognizes invoice templates after hundreds of examples; a clinician checks a new lab report against a years-long trajectory; a researcher maintains a running model of what is established, what is contested, and what has been superseded. In each case, the expert's competence is not a property of any single reading. It is a property of accumulated readings. Current Document AI systems possess the first kind of competence in abundance. The second kind they do not possess at all. Training on prior documents encodes statistical regularities into fixed weights that cannot be inspected, localized, selectively revised, or erased without retraining. Competence through accumulated readings requires state that lives beyond the training run.
\begin{figure}[t]
\centering
\begin{tikzpicture}[
  font=\small,
  doc/.style={rectangle, draw, rounded corners=2pt, minimum width=1.1cm, minimum height=0.6cm, fill=blue!8},
  proc/.style={rectangle, draw, rounded corners=2pt, minimum width=2.0cm, minimum height=0.6cm, fill=gray!15},
  outp/.style={rectangle, draw, rounded corners=2pt, minimum width=1.3cm, minimum height=0.6cm, fill=green!8},
  outpw/.style={rectangle, draw, rounded corners=2pt, minimum width=1.9cm, minimum height=0.6cm, fill=green!8},
  rst/.style={rectangle, draw, rounded corners=2pt, minimum width=1.2cm, minimum height=0.6cm, fill=red!12},
  st/.style={rectangle, draw, rounded corners=2pt, minimum width=1.2cm, minimum height=0.6cm, fill=orange!18},
  arr/.style={-{Stealth[length=2mm]}, thick},
  darr/.style={-{Stealth[length=2mm]}, thick, dashed, red!70!black},
  sarr/.style={-{Stealth[length=2mm]}, thick, orange!60!black}
]

\node[font=\small\bfseries, anchor=west] at (-0.6,0.85) {(a) Current: episodic, stateless};
\node[doc] (d1) at (0,0) {$D_1$};
\node[proc, minimum width=1.9cm] (f1) at (2.1,0) {$P(y\mid D_1)$};
\node[outp] (y1a) at (4.2,0) {$y_1$};
\node[rst] (r1) at (5.8,0) {RESET};
\draw[arr] (d1)--(f1); \draw[arr] (f1)--(y1a); \draw[darr] (y1a)--(r1);

\node[doc] (d2) at (0,-1.0) {$D_2$};
\node[proc, minimum width=1.9cm] (f2) at (2.1,-1.0) {$P(y\mid D_2)$};
\node[outp] (y2a) at (4.2,-1.0) {$y_2$};
\node[rst] (r2) at (5.8,-1.0) {RESET};
\draw[arr] (d2)--(f2); \draw[arr] (f2)--(y2a); \draw[darr] (y2a)--(r2);

\node[font=\small\bfseries, anchor=west] at (-0.6,-2.05) {(b) Proposed: longitudinal, stateful};

\node[doc] (d1b) at (0,-3.0) {$D_1$};
\node[proc] (p1) at (2.2,-3.0) {Incorporate};
\node[st]  (s1)  at (4.3,-3.0) {$S_1$};
\node[outp] (y1b) at (5.9,-3.0) {$y_1$};
\draw[arr] (d1b)--(p1); \draw[arr] (p1)--(s1); \draw[arr] (s1)--(y1b);

\node[doc]  (d2b) at (0,-4.5) {$D_2$};
\node[proc] (p2)  at (2.2,-4.5) {Incorporate};
\node[st]   (s2)  at (4.3,-4.5) {$S_2$};
\node[outpw] (y2b) at (6.0,-4.5) {$y_2$ informed};
\draw[arr] (d2b)--(p2); \draw[arr] (p2)--(s2); \draw[arr] (s2)--(y2b);

\draw[sarr] (s1.south) -- ++(0,-0.45) -| (p2.north)
  node[pos=0.25, above, font=\tiny] {persist};

\end{tikzpicture}
\caption{The \emph{statelessness bottleneck}. (a) Current systems
process each document as an isolated function and reset after every
invocation. (b) The proposed paradigm \emph{incorporates} each
document into persistent state at every session, producing an output
at each step: $D_1$ yields $y_1$ and state $S_1$; $S_1$ is carried
forward and merged with the validated evidence of $D_2$ to produce
$S_2$, from which the informed answer $y_2$ is generated. Processing
$D_1$ therefore changes how $D_2$ is understood.}
\label{fig:paradigm}
\end{figure}
The reason is architectural, not accidental. Every mainstream Document AI pipeline, from layout-aware pretraining~\cite{xu2020layoutlm,huang2022layoutlmv3}; to OCR-free generation~\cite{kim2022ocr}; to visual retrieval~\cite{Faysse2025ColPali,Yu2025VisRAG}; to adaptive evidence agents~\cite{asai2024self}, is designed, trained, and evaluated as a stateless function:
\begin{equation}
    \hat{y} = \arg\max_{y}\; P(y \mid D;\, \theta)
    \label{eq:stateless}
\end{equation}
where $D$ is a single document input, $\theta$ denotes frozen parameters, and $\hat{y}$ is the produced output. No variable in Equation~\eqref{eq:stateless} carries a session index. The parameters $\theta$ are fixed at inference and do not accumulate evidence across inputs. The intermediate representations, attention maps, and reasoning traces computed during the forward pass are discarded upon return of $\hat{y}$. No schema is extracted from the encounter. No record states that $D$ was seen, what was learned from it, or how it relates to prior inputs. The
system that produces $\hat{y}$ for the first invoice and the system that produces $\hat{y}$ for the ten-thousandth are, in every computational respect described by
Equation~\eqref{eq:stateless}, identical. We name this pattern the \emph{statelessness bottleneck}, and we argue in this survey that it is the central unresolved problem in Document AI. Figure~\ref{fig:paradigm} contrasts this stateless pattern with the proposed \emph{longitudinal paradigm}, in which validated evidence accumulates into a \emph{persistent state} that conditions future inference.

The bottleneck is not hidden. It is visible in the three engineering responses — \emph{parameter scaling}, \emph{context extension}, and \emph{retrieval augmentation} — each addresses a symptom while leaving the condition untouched. Scaling model parameters incorporates domain knowledge but tangles it into weights that cannot be inspected, localized, or selectively erased. Extending context windows to millions of tokens~\cite{dao2022flashattention,gu2023mamba} provides a workspace wiped when the session ends, and utilization degrades with length~\cite{liu2024lost}. Retrieval-Augmented Generation (RAG) over vector stores~\cite{lewis2020retrieval,karpukhin2020dense} and graph indices~\cite{edge2024local} makes stored content accessible, but \emph{accessibility is not understanding, and lookup is not learning}: an index is reshaped by what is added, not by how its content was reasoned about across queries~\cite{edge2024local,guo2025lightrag}. \emph{Storage means information persists; retrieval means information can be found; consolidation means that prior processing has altered the system's future behavior}. Current Document AI achieves the first two; the third is absent.

The absence is not a gap that scaling will close. Biological memory systems solved this problem through architectural separation, not capacity expansion: Complementary Learning Systems theory~\cite{mcclelland1995there} describes a fast episodic buffer alongside a slow semantic store with offline consolidation~\cite{tse2007schemas}, active forgetting~\cite{teki2017persistence}, and temporal binding~\cite{howard2002distributed}. Machine learning has addressed individual facets: continual learning constrains sequential updates~\cite{kirkpatrick2017overcoming}; agent-memory systems maintain cross-session state~\cite{park2023generative,packer2023memgpt}; associative graph memory supports experience-linked access~\cite{gutierrez2024hipporag}, but none provides the \emph{document-native conjunction} formalized in this survey: no existing mechanism couples persistence to 2D spatial layout, span-level provenance, bitemporal validity, and governed deletion. The consequence is not theoretical. In legal discovery, review teams re-derive findings across sessions because systems do not carry them forward. In clinical documentation, contradictions between successive reports go undetected without a longitudinal model. In financial auditing, provenance chains break between retrieval and reasoning when systems cannot track how conclusions derive from sources, transformations, and time. \emph{The operational need is not a system that reads better. It is a system that remembers what it read, knows where the memory came from, knows when it is still valid, and can be told to forget}.

This survey provides a unified vocabulary, formalism, and evaluation protocol for persistent Document AI. It synthesizes the literature under a transparent coding procedure that distinguishes demonstrated capabilities from inferred and hypothetical ones. Under our operational definition of persistent evidence-grounded state, we identify the \emph{statelessness bottleneck} as a structural property of the task formulations, benchmarks, and training objectives that organize the field, rather than a limitation of any particular architecture. We formalize \emph{persistent evidence-grounded document state} as a six-component representational substrate and specify the lifecycle operations required to maintain it across documents, sessions, and time. We formalize the failure modes that persistence introduces: \textbf{\textit{Memory Harm}}, the risk that consolidated errors contaminate future reasoning; \textbf{\textit{Forgetting Fidelity}}, the requirement that deleted information remain genuinely unrecoverable; and \textbf{\textit{Coverage Retention}}, the requirement that deletion preserve unrelated knowledge. We derive a contamination-accumulation theorem showing that unvalidated writes drive state corruption toward certainty over $k$ sessions, and that a validation gate bounds the accumulation rate. We then audit ten representative benchmarks against eight statefulness criteria, coded under the procedure of Section~\ref{sec:audit_existing}, and show that existing suites evaluate within-episode performance while leaving cross-session state evolution largely untested. From this gap, we derive a \emph{longitudinal benchmark harness} based on five counterfactual metrics: \emph{Experience Gain}, \emph{Cost Efficiency}, \emph{Memory Harm}, \emph{Forgetting Fidelity}, and \emph{Coverage Retention}, to measure the benefit, cost, risk, and governability of persistent document state.

The remainder of this survey is organized as follows.
Section~\ref{sec:evolution} traces the evolution from single-page perception to multimodal retrieval and identifies the boundary between passive access and active consolidation.
Section~\ref{sec:formalization} defines the persistent state tuple, lifecycle operations, and candidate architectural substrates. Section~\ref{sec:audit} presents the \emph{statefulness audit}, \emph{longitudinal benchmark harness}, and counterfactual metrics. Section~\ref{sec:governance} examines failure modes, machine
unlearning, deployment considerations, and open research directions. Throughout this survey, we distinguish three forms of information access that are often conflated under the term ``memory.'' \textbf{\emph{Context}} denotes bounded, session-local working state, such as a Transformer's attention window or KV cache~\cite{vaswani2017attention,xiao2024efficient,kwon2023efficient}, that is created during inference and discarded afterward. \textbf{\emph{Retrieval}} denotes on-demand access to externally stored content through similarity search or graph traversal~\cite{lewis2020retrieval,karpukhin2020dense,edge2024local}. \textbf{\emph{Memory}}, in our framework, denotes persistent evidence-grounded document state: information that is retained, validated, consolidated, temporally versioned, and governed across sessions~\cite{packer2023memgpt,gutierrez2024hipporag,zhang2025survey}. Thus, context provides transient computation, retrieval provides access to external information, and memory provides persistent state. This distinction is essential for identifying the \emph{statelessness bottleneck}: a system may have long context or powerful retrieval without becoming stateful.

\section{Capability Evolution and the Access--Consolidation
Boundary}
\label{sec:evolution}

\begin{figure*}[!t]
\centering
\begin{tikzpicture}[font=\scriptsize,
  cell/.style={circle, fill=black!75, inner sep=1.5pt},
  target/.style={circle, fill=red!70!black, draw=black, line width=0.6pt, inner sep=2.2pt}]
\draw[->,thick] (0,0) -- (12.8,0) node[right,font=\small] {Document scope $\rightarrow$};
\draw[->,thick] (0,0) -- (0,8.4) node[above,font=\small] {State persistence $\rightarrow$};
\foreach \x/\l in {1.4/Region, 3.8/Page, 6.2/Multi-page, 8.6/Collection, 11.2/Stream}
  { \node[below] at (\x,-0.15) {\l}; \draw[gray!35] (\x,0) -- (\x,7.9); }
\foreach \y/\l in {0.8/None, 2.1/Session, 3.4/External store, 4.7/Episodic, 6.0/Semantic, 7.2/Governed}
  { \node[left,align=right] at (-0.15,\y) {\l}; \draw[gray!35] (0,\y) -- (12.4,\y); }
\fill[red!10] (0.3,5.45) rectangle (12.4,7.85);
\node[font=\scriptsize,red!55!black,anchor=north east,align=right] at (12.3,7.8)
  {empty at every scope:\\no consolidation, no governed forgetting};
\node[cell] at (3.8,0.8) {};
\node[above left,align=right] at (3.7,0.85) {DocVQA, ChartQA, FUNSD,\\ LayoutLM, Donut,\\ EAML, SelfDoc, UDoc,\\ VLCDoC, GlobalDoc};
\node[cell] at (6.2,0.8) {};
\node[above right] at (6.3,0.85) {MP-DocVQA, DUDE};
\node[cell] at (6.2,2.1) {};
\node[above right] at (6.3,2.15) {LongDocURL, M-LongDoc};
\node[cell] at (8.6,3.4) {};
\node[below right,align=left] at (8.7,3.25) {ColPali/ViDoRe, VisRAG,\\ GraphRAG, LightRAG};
\node[cell] at (11.2,4.7) {};
\node[below left,align=right] at (11.1,4.55) {MemGPT, LoCoMo,\\ SYNAPSE, GAAMA\\ (text-only)};
\node[target] at (11.2,6.6) {};
\node[left,align=right,font=\scriptsize\bfseries,red!55!black] at (10.85,6.6) {Target: $\mathcal{C}^{*}$};
\end{tikzpicture}
\caption{The capability map of Document AI under the placement
rubric of Section~\ref{sec:evolution}: horizontal axis = broadest
document scope; vertical axis = highest persistence regime. The
cross-modal representation lineage saturates at Page scope with no
persistence; retrieval systems reach External-store persistence at
Collection scope; text-only agent memory reaches Episodic
persistence over conversation streams, off the document-structure
axis. The Semantic and Governed band is empty at every scope; the
red target marks the conjunctive capability $\mathcal{C}^{*}$.
That emptiness is the \emph{statelessness bottleneck}.}
\label{fig:capmap}
\end{figure*}

\paragraph{Placement rubric for Figure~\ref{fig:capmap}.}
Each system is placed at the coordinate $(x, y)$ where $x$ is the
broadest document scope it processes (Region, Page, Multi-page,
Collection, or Stream) and $y$ is the highest persistence regime
it achieves (None, Session, External store, Episodic, Semantic,
or Governed). Scope is determined by the input specification of
the system's primary evaluation benchmark. Persistence is
determined by whether the system's architecture retains state
across explicit session boundaries: ``None'' for stateless
functions; ``Session'' for long-context models that process
multiple pages within one inference call; ``External store'' for
retrieval-augmented systems with persistent indices; ``Episodic''
for agent-memory systems that retain cross-session experience
without document-native structure. No published system reaches
the Semantic or Governed bands under our operational definition.
Benchmarks are plotted by their evaluation protocol; architectures
by their design. Text-mediated agent abstractions (\eg, spreading
activation over concept graphs) remain at Episodic because they
discard document-native structure; the Semantic band requires
consolidation of visual and layout evidence into reusable schemas
with span-level provenance, which no cited system performs.

This section traces the capability evolution from single-page
perception to agentic reasoning and identifies the boundary where
accumulated capability stops and the \emph{statelessness bottleneck}
begins. Each lineage is examined for what it achieves within a single
episode and what it structurally cannot achieve across episodes.

\subsection{Perception, Recognition, and Layout Analysis}
\label{sec:perception}

Early document processing relied on hand-engineered pipelines of
binarization, segmentation, OCR, and template
matching~\cite{nagy2000twenty}; deep learning shifted the field
toward learned visual representations for page-level feature
extraction~\cite{harley2015evaluation} and layout
detection~\cite{binmakhashen2019document,clausner2013significance}.
Layout benchmarks subsequently scaled in size and diversity, from
PubLayNet~\cite{zhong2019publaynet} and
DocBank~\cite{li2020docbank} through
DocLayNet~\cite{pfitzmann2022doclaynet} and
D$^{4}$LA~\cite{da2023vision} to the multi-component
OmniDocBench~\cite{ouyang2025omnidocbench}. Text recognition
evolved from CTC-based sequence
models~\cite{graves2006connectionist} through
CRNN~\cite{shi2016end} to Transformer-based
OCR~\cite{li2023trocr}; handwritten text recognition followed a
parallel trajectory from recurrent
architectures~\cite{graves2008offline} to segmentation-free
document-level models~\cite{coquenet2023dan,mondal2024icdar};
and mathematical expression recognition introduced
two-dimensional structural
criteria~\cite{zanibbi2012recognition,mouchere2014icfhr,deng2017image}.
These perception and recognition capabilities provide inputs to
downstream document-understanding tasks. For this survey, the
important distinction is not that recognition or layout systems
are inherently stateless, but that their standard inference
formulation is typically episode-local. Unless an explicit
adaptation or memory mechanism is introduced, processing
additional pages does not by itself expose a persistent
operational state to subsequent episodes. Repeated inference and
stateful adaptation are therefore distinct capabilities, with the
latter forming the persistence dimension examined in the remainder
of this survey.

\subsection{Cross-Modal and Layout-Aware Representation}
\label{sec:representation}

Multimodal document representation evolved along complementary cross-modal and layout-aware lineages; the cross-modal lineage framed documents as retrieval and classification objects. EAML~\cite{bakkali2021eaml} introduced ensemble self-attention with mutual learning to jointly train image and text modalities; SelfDoc~\cite{li2021selfdoc} proposed self-supervised cross-modal representation learning; UDoc~\cite{Gu2022UnifiedPF} consumed multimodal embeddings jointly under losses explicitly aligning textual and visual modalities; Bi-VLDoc~\cite{luo2025bi} added bidirectional vision--language supervision; VLCDoC~\cite{bakkali2023vlcdoc} applied vision--language contrastive pretraining to cross-modal document classification; and GlobalDoc~\cite{bakkali2025globaldoc} culminated the line with unified vision--language objectives for real-world document image retrieval and classification. The thread is that semantic identity is a property of the \emph{joint} vision--language embedding, not of either modality alone; Doc2graph~\cite{gemelli2022doc2graph} extended the principle to task-agnostic document understanding.

The layout-aware lineage developed alongside it. LayoutLM~\cite{xu2020layoutlm} showed that injecting two-dimensional bounding-box coordinates into self-attention yields substantial gains on form understanding and information extraction — a design template refined for five years. LayoutLMv2 and LayoutLMv3~\cite{xu2021layoutlmv2,huang2022layoutlmv3} added visual patch features alongside text and spatial tokens; DiT~\cite{li2022dit} applied masked image modeling to document pages; DocFormer~\cite{appalaraju2021docformer} proposed a unified multi-modal architecture; StrucTexT, BROS, TILT, StructuralLM, and ERNIE-Layout~\cite{li2021structext,hong2022bros,powalski2021going,Li2021StructuralLM,peng2022ernie} explored alternative spatial encodings and pretraining objectives; and UDOP, DocLLM, and LayoutLLM~\cite{tang2023unifying,Wang2024DocLLM,Luo2024LayoutLLM} extended layout-aware modeling to generative LLMs. The common thread is that position is semantics. OCR-free architectures such as Donut~\cite{kim2022ocr} and Pix2Struct~\cite{lee2023pix2struct} eliminated the recognition stage, mapping raw page images to structured text through encoder-decoder transformers — bypassing token-level OCR error propagation and simplifying deployment, at the cost of character-level precision modular pipelines provide.

More importantly, the latent representations produced by the three lineages: cross-modal, layout-aware, and OCR-free, are typically transient. They exist during inference and, unless explicitly externalized, are discarded when processing ends. Standard systems do not retain a document encoding as an operational state for processing, nor do they use the encoding of one invoice to accelerate the parsing of another. Table~\ref{tab:reprlineage} makes this pattern explicit: across representation families from BERT to GlobalDoc to LayoutLLM, the native-state column is uniformly empty under our definition of persistent document state. The representation is computed and consumed within an episode; it does not, by itself, constitute persistent knowledge that can be validated, consolidated, versioned, or governed across sessions.

\begin{table*}[!t]
\centering
\caption{Representation lineage in Document AI. The final column
indicates whether the architecture natively maintains persistent
state across sessions. No representation family produces state that
survives the episode.}
\label{tab:reprlineage}
\small
\resizebox{\linewidth}{!}{%
\begin{tabular}{@{}lccccccc@{}}
\toprule
\textbf{Method} & \textbf{Year} & \textbf{Modality} &
\textbf{Pretraining objective} & \textbf{Layout} &
\textbf{Visual} & \textbf{OCR-free} & \textbf{Native state} \\
\midrule
BERT~\cite{devlin2019bert} & 2019 & Text & MLM & -- & -- & -- & -- \\
LayoutLMv1--v3~\cite{xu2020layoutlm,huang2022layoutlmv3} & 2020--22 & Text+Layout+Vision & MLM/MIM & \checkmark & \checkmark & -- & -- \\
EAML~\cite{bakkali2021eaml} & 2021 & Image+Text & Mutual learning & -- & \checkmark & -- & -- \\
SelfDoc~\cite{li2021selfdoc} & 2021 & Cross-modal & Self-supervised & \checkmark & \checkmark & -- & -- \\
UDoc~\cite{Gu2022UnifiedPF} & 2021 & Cross-modal & Unified alignment & \checkmark & \checkmark & -- & -- \\
DiT~\cite{li2022dit} & 2022 & Vision & MIM & Implicit & \checkmark & \checkmark & -- \\
Bi-VLDoc~\cite{luo2025bi} & 2022 & Cross-modal & Bidirectional VL & \checkmark & \checkmark & -- & -- \\
Donut~\cite{kim2022ocr} & 2022 & Vision$\to$Text & Seq2seq & Implicit & \checkmark & \checkmark & -- \\
VLCDoC~\cite{bakkali2023vlcdoc} & 2023 & Cross-modal & VL contrastive & -- & \checkmark & -- & -- \\
Pix2Struct~\cite{lee2023pix2struct} & 2023 & Vision$\to$Text & Seq2seq & Implicit & \checkmark & \checkmark & -- \\
UDOP~\cite{tang2023unifying} & 2023 & Vision+Text+Layout & Generative & \checkmark & \checkmark & Partial & -- \\
DocLLM / LayoutLLM~\cite{Wang2024DocLLM,Luo2024LayoutLLM} & 2024 & Vision+Text+Layout & Instruction tuning & \checkmark & \checkmark & Partial & -- \\
GlobalDoc~\cite{bakkali2025globaldoc} & 2025 & Cross-modal & Unified VL & \checkmark & \checkmark & -- & -- \\
\bottomrule
\end{tabular}
}%
\end{table*}

\subsection{Downstream Tasks and the Single-Shot Convergence}
\label{sec:tasks}

The tasks posed to these representations grew more demanding; their genealogy reveals a consistent structural property. Task scope expanded from single-page visual QA and information extraction~\cite{kembhavi2017you,singh2019towards,mishra2019ocr,biten2019scene,mathew2021docvqa,Mathew2022InfographicVQA,jaume2019funsd,borchmann2021due,stanislawek2021kleister,Tanaka2021VisualMRC,tito2021document}, to table and chart reasoning with arithmetic programs~\cite{Herzig2020TAPAS,Yin2020TaBERT,smock2022pubtables,zhu2021tat,Chen2021FinQA,Zhao2022MultiHiertt,Methani2020PlotQA,Masry2022ChartQA,Masry2025ChartQAPro,Chen2026ChartR,kahou2017figureqa,Kafle2018DVQA}, to scientific and multipage evidence aggregation~\cite{dasigi2021dataset,pramanick2024spiqa,Sundar2024cPAPERS,Li2024M3SciQA,tito2023hierarchical,van2023document,Blau2024GRAM,Cho2025M3DocVQA}, to long-context and corpus-level synthesis~\cite{Deng2025LongDocURL,chia2025m,wang2024needle,caciularu2021cdlm,Suri2025VisDoM,nourbakhsh2025coming}. At each step the unit of challenge grew — from reading a region, to understanding a page, to navigating a document, to acquiring evidence from a collection — yet the field's assumption remained unchanged: the document is a fixed input, the task is a single function, and the function is evaluated once. No benchmark in this genealogy asks whether the system is different after processing the document than it was before. The single-shot formulation is not an accident of any benchmark; it is the shared grammar of the task family.

\subsection{Long-Context Scaling and Efficient Architectures}
\label{sec:longcontext}

As document length grew, the field's engineering response was to extend the context window. Sparse and factorized attention mechanisms~\cite{beltagy2020longformer,zaheer2020big,child2019generating,Kitaev2020ReformerTE,dai2019transformer} reduced the quadratic cost of self-attention; position-encoding innovations~\cite{press2021train,su2024roformer,chen2023extending,peng2024yarn,ding2024longrope} extended extrapolation range; and FlashAttention~\cite{dao2022flashattention} and state-space models~\cite{gu2023mamba} reduced computational cost. Prompt compression methods~\cite{jiang2023llmlingua,chevalier2023adapting,mu2023learning} reduce the token count fed to the model while attempting to preserve task performance. These methods address the cost of long context but not its persistence.
Empirical evaluations consistently identify structural limits: models exhibit reduced accuracy when relevant evidence is placed in the middle of long sequences~\cite{liu2024lost}, and performance decreases as the proportion of irrelevant distractors increases~\cite{shi2023large}. Extending the context window enlarges working memory without creating persistent state; a million-token context functions as a larger scratchpad that clears when inference terminates.

\subsection{Retrieval, Multimodal RAG, and State Integration}
\label{sec:retrieval}

The retrieval revolution addressed the collection-level challenge without disturbing the single-shot assumption. RAG, introduced by Lewis \etal~\cite{lewis2020retrieval}, decomposed knowledge-intensive generation into retrieval and generation steps, conditioning generation on external evidence without parameter changes. REALM~\cite{guu2020retrieval} jointly pretrained retrieval and language modeling; RETRO~\cite{borgeaud2022improving} scaled retrieval-augmented pretraining to trillions of tokens; dense dual-encoder~\cite{karpukhin2020dense} and late-interaction~\cite{khattab2020colbert} retrieval improved passage quality; Atlas~\cite{izacard2023atlas} demonstrated few-shot learning with small language models; comprehensive surveys cover RAG architectures, evaluation, and failure modes~\cite{procko2024graph,gao2026scaling}.

Document AI adapted these mechanisms to visually rich inputs. ColPali~\cite{Faysse2025ColPali} retrieves page images through multi-vector late-interaction representations, preserving layout that text-only chunking discards; VisRAG~\cite{Yu2025VisRAG} integrates visual document representations into the retrieval--generation loop; VDocRAG~\cite{Tanaka2025VDocRAG} extends visual retrieval to open-domain document QA; multimodal document retrieval~\cite{dong2025mmdocir} and multimodal RAG~\cite{dong2026benchmarking} benchmarks evaluate page and layout granularities; layout-aware dynamic retrieval~\cite{sourati2026lad} constructs symbolic document graphs during ingestion and with inference-time navigation; graph-based retrieval~\cite{edge2024local,guo2025lightrag} constructs corpus-level abstractions from ingested documents; knowledge-graph prompting~\cite{wang2024knowledge} uses structured graph evidence for multi-document QA; retrieval evaluation frameworks~\cite{chen2024benchmarking,ru2024ragchecker,es2024ragas} diagnose retrieval and generation failures separately.

These systems advance evidence access; this survey's distinction is between access and integration. A ColPali~\cite{Faysse2025ColPali} store that has served ten thousand queries has grown but has not consolidated: the vectors are read, not learned from. GraphRAG~\cite{edge2024local} and LightRAG~\cite{guo2025lightrag} reorganize during ingestion, building community summaries and dual-level indices; yet ingestion-driven reorganization is not experience-conditioned consolidation: these indices are reshaped by what is added, not by how their content was reasoned about across queries, and carry no temporal validity, no provenance through abstraction, and no governed deletion. Retrieval solves the problem of finding the right passage; consolidation solves the problem of being changed by having found it.

\begin{table*}[t]
\centering
\caption{Adjacent persistent-state paradigms against the conjunctive
requirement $\mathcal{C}^{*}$. No existing paradigm satisfies all
six properties jointly.}
\label{tab:adjacent}
\small
\begin{tabular}{@{}lcccccc@{}}
\toprule
\textbf{Paradigm} & \textbf{Multimodal} & \textbf{Doc structure} & \textbf{Persistent} & \textbf{Temporal} & \textbf{Provenance} & \textbf{Governance} \\
\midrule
Continual learning & -- & -- & $\checkmark$ & -- & -- & -- \\
Agent memory & Partial & -- & $\checkmark$ & Partial & Partial & -- \\
Multimodal RAG & $\checkmark$ & $\checkmark$ & Ext.\ index & -- & Page & -- \\
Commercial LLM memory & Partial & -- & Context only & -- & -- & -- \\
\textbf{Target $\mathcal{C}^{*}$} & $\checkmark$ & $\checkmark$ & $\checkmark$ & $\checkmark$ & $\checkmark$ & $\checkmark$ \\
\bottomrule
\end{tabular}
\end{table*}
\subsection{Agents, Continual Learning, and Adjacent Persistent State}
\label{sec:agents}

The agentic paradigm extends retrieval by introducing within-episode
state. Systems that invoke tools, plan multi-step workflows, and
maintain scratchpads across reasoning
steps~\cite{yao2022react,schick2023toolformer} possess state that
static pipelines lack. Reflexion~\cite{shinn2023reflexion}
introduces verbal reinforcement learning for iterative refinement.
Voyager~\cite{wang2023voyager} demonstrates open-ended skill
accumulation. Adaptive retrieval agents such as
Self-RAG~\cite{asai2024self} and CRAG~\cite{yan2024corrective}
dynamically decide when and what to retrieve. SimpleDoc and
Doc-V*~\cite{Jain2025SimpleDoc,zheng2026doc} introduce
iterative page retrieval and coarse-to-fine visual reasoning for
multipage documents. These capabilities are valuable, yet agent
state remains strictly episodic: it persists within a task episode
and is discarded at episode termination. A scratchpad tracking
reasoning steps across five tool invocations differs fundamentally
in kind, not merely in scale, from state that tracks knowledge
across five hundred documents.

Adjacent fields have demonstrated that cross-session persistence is achievable. Differentiable memory~\cite{weston2014memory,sukhbaatar2015end,graves2016hybrid,wang2024memoryllm} proved that external state can integrate into neural computation, but not the document-specific requirements of multimodal evidence, spatial layout, temporal versioning, and provenance-linked consolidation. Continual learning~\cite{kirkpatrick2017overcoming,Rebuffi2017iCaRL,lopez2017gradient,parisi2019continual,delange2021continual,monaikul2021continual,sassioui2026amd} has studied sequential acquisition for over a decade but produces no inspectable or deletable document state. Persistent agent systems introduce cross-session state: Generative Agents store experiences, generate reflections, and retrieve them in later interactions~\cite{park2023generative}; MemGPT~\cite{packer2023memgpt} manages hierarchical memory inspired by operating-system virtual memory; MemoryBank~\cite{Zhong2024MemoryBank} introduces time-aware memory decay; HippoRAG~\cite{gutierrez2024hipporag} retrieves via hippocampal indexing theory with a knowledge-graph associative index; SYNAPSE~\cite{jiang2026synapse} couples episodic and semantic stores through spreading activation; GAAMA~\cite{paul2026gaama} adds a concept-mediated associative knowledge graph; Agent Workflow Memory~\cite{wang2024agent} induces reusable workflows from past experience; surveys map storage, reflection, and experience abstraction~\cite{zhang2025survey,luo2026storage}; Always-on agents~\cite{ding2026always} independently specify a
persistent-state record, a lifecycle, and an evaluation protocol
for long-running, text-mediated agent systems. Because that work
does not couple persistence to multimodal document evidence,
span-level provenance, bitemporal validity, or governed deletion,
it falls outside the document-native conjunctive requirement; and multimodal long-term memory agents~\cite{long2026seeing} integrate visual, auditory, and textual episodes. CoALA~\cite{sumers2024coala} provides a conceptual vocabulary for
agent memory (working, episodic, semantic, and procedural stores)
that aligns with our episodic--semantic split, but offers no formal
state tuple, no provenance or validity invariants, no deletion
guarantees, and no document grounding.
In the agent literature these mechanisms are termed ``memory''; under the introduction's taxonomy they implement cross-session persistence and associative retrieval, but not the document-native specification of Section~\ref{sec:formalization}: none couples persistence to multimodal document evidence, span-level provenance, bitemporal validity, and governed deletion. Temporal knowledge-graph memory for agents is developed in
Zep/Graphiti~\cite{rasmussen2025zep}; scalable
extract--consolidate--retrieve memory in
Mem0~\cite{chhikara2025mem0}; and multi-session longitudinal
memory evaluation in LongMemEval~\cite{wu2025longmemeval}.
None of these couples persistence to multimodal document
evidence, span-level provenance, bitemporal validity, or
governed deletion.

\subsection{Commercial Memory and State Operators}
\label{sec:boundary}

Commercial deployments introduced a related but distinct capability: major providers now offer ``memory'' features that persist facts and preferences across sessions — genuine advances in personal assistant state. We evaluate these systems by observable behavioral properties rather than speculating about proprietary backends: public documentation describes user-editable memory items and summaries surfaced into later interactions. Whatever the underlying implementation, these features share a common observable signature: they retain the fact but discard the evidence. They cannot trace a remembered liability cap back to the bounding box on page four of the contract from which it was extracted; nor distinguish a clause superseded by a later amendment from one that remains in force; nor execute a verifiable deletion when a data retention policy arrives. They are, in the vocabulary of this survey, \textit{persistent context}, not \textit{persistent memory}. The gap between personal assistant and institutional document memory is not engineering effort but architectural requirements not yet formulated.

Formulating those requirements begins with an operational separation the current literature conflates. \emph{Access} reads stored content without modifying state; it corresponds to retrieval via the associative index and leaves persistent state unchanged. \emph{Adaptation} changes model parameters through gradient updates on a document stream; it produces no inspectable or deletable state. \emph{Consolidation} transforms persistent state based on validated evidence, producing a state that alters future processing. These operators can be composed — a complete system will compose them — but must not be conflated: retrieving a stored passage and conditioning on it is access; fine-tuning weights on a stream is adaptation; extracting a reusable schema from ten thousand invoices, linking it to provenance, updating it when the vendor's template changes, and removing it when the relationship ends is consolidation. The first two operators are well studied; the third is the subject of this survey. Section~\ref{sec:formalization} formalizes these three operators as the components of the state-update function and the operator set that governs persistent state transitions.

The target capability is therefore not ``memory'' in the generic
sense. It is a specific intersection of requirements that no
existing system satisfies jointly. We define this intersection
as the conjunctive capability:
\begin{align}
\mathcal{C}^{*} ={}&
\text{multimodal evidence} \cap \text{document structure}
\cap \text{persistent} \notag\\[-1mm] \text{state} &\cap\; \text{temporal validity}
\cap \text{provenance} \cap \text{governance}.
\label{eq:conjunctive}
\end{align}

Continual learning satisfies persistence but not multimodal document structure or provenance. Agent memory satisfies persistence and partial provenance but not document-native layout, bitemporal versioning, or governed deletion. Multimodal
RAG satisfies document structure and evidence modality but not persistence, consolidation, or forgetting. Commercial LLM memory features satisfy persistence in the form of append-only text
summaries but lack document structure, temporal validity,
provenance, and governance. No existing paradigm satisfies all
six properties jointly; Table~\ref{tab:adjacent} makes this gap
explicit. The survey's hypothesis is not that each component is
missing from the literature. It is that the intersection lacks a
shared, document-native task formulation, representational
specification, and evaluation protocol.

\subsection{Positioning Against Existing Surveys}
\label{sec:positioning}

Several surveys cover portions of the landscape reviewed above,
and we defer to them for the ground they cover. Recognition and
layout analysis are surveyed
in~\cite{garrido2025handwritten,binmakhashen2019document}. Document VQA
and MLLM-based document understanding are surveyed
in~\cite{ding2026survey,wang2025document,subramani2020survey}.
Retrieval-augmented generation is surveyed
in~\cite{procko2024graph,gao2026scaling}. Agent memory is
surveyed in~\cite{zhang2025survey,luo2026storage}.
Continual learning is surveyed in~\cite{parisi2019continual}. We
cite these works as authority and do not re-review the material
they cover.

Our survey extracts from each lineage only the evidence that bears on persistence: whether the task formulation, the architecture, or the benchmark admits state that survives an episode. These surveys treat their respective strands in isolation. This survey's contribution is the synthesis that connects them: it treats persistence as a first-class axis across the full stack, audits benchmarks for statefulness, and formalizes the state tuple, lifecycle operations, and operator
set. To our knowledge, no existing Document AI survey performs this
unification: treating persistence as a first-class axis across the
full stack, formalizing the state tuple, lifecycle operations, and
operator set, and auditing ten representative benchmarks for
statefulness. Section~\ref{sec:formalization} provides the representational specification. Section~\ref{sec:audit} provides the evaluation protocol. Together, they define what it would mean for a
Document AI system to remember, and what it would take to prove
that it does.

\section{Formalization of Persistent Evidence-Grounded State}
\label{sec:formalization}

The intersection identified in the previous section requires a
precise definition of what persistent state means in the document
setting. ``Memory'' is an overloaded term in the machine-learning
literature, referring to mechanisms as different as recurrent and
differentiable hidden
states~\cite{graves2016hybrid,weston2014memory,sukhbaatar2015end},
transient key-value attention
caches~\cite{xiao2024efficient,kwon2023efficient}, episodic buffers
and agent scratchpads~\cite{park2023generative,packer2023memgpt},
knowledge integrated into parametric
weights~\cite{kirkpatrick2017overcoming,wang2024memoryllm}, and
non-parametric external
datastores~\cite{lewis2020retrieval,karpukhin2020dense,gutierrez2024hipporag}.
These mechanisms differ in persistence, inspectability, and
governability. We therefore use \emph{persistent evidence-grounded
document state} to denote a state with explicit content, relations,
temporal validity, provenance, and authorization, and we define the
operations and invariants required to maintain it.

\subsection{Document Stream and State Specification}
\label{sec:state_spec}

A document collection evolves as a stream. Documents arrive over
time, may revise or supersede one another, belong to document
families, and carry metadata that determines their authority and
validity. We represent the collection as an ordered sequence
$\mathcal{S} = (D_1, D_2, \dots, D_T)$ over discrete sessions
$t \in \{1, \dots, T\}$. Each document artifact $D_t$ is
represented by the six-component tuple:
\begin{equation}
D_t = (V_t,\; T_t,\; B_t,\; G_t,\; H_t,\; M_t)
\label{eq:doctuple}
\end{equation}
where $V_t$ denotes visual pixel tensors, $T_t$ textual tokens,
$B_t = \{b_i\}_{i=1}^{N_t}$ denotes the set of spatial bounding
boxes with each $b_i=(x_{1,i},y_{1,i},x_{2,i},y_{2,i})\in\mathbb{R}^4$,
$G_t$ structural relation graphs (e.g., table rows and columns,
figure--caption links, and form--field pairings), $H_t$ hierarchical
parse trees (e.g., sections, headings, and nesting), and $M_t$
metadata including document identity, version, timestamp, and
authorization policy. This representation is intentionally richer
than a pixel tensor alone: the meaning of a table depends on its
row--column structure, while the meaning of an amendment depends on
the version it supersedes.

The state maintained over this stream preserves the same
distinctions. We define the \emph{persistent evidence-grounded
document state} at session $t$ as:
\begin{equation}
S_t = (E_t,\; K_t,\; \Gamma_t,\; \mathcal{V}_t,\; P_t,\; A_t)
\label{eq:statetuple}
\end{equation}
where $E_t$ is the episodic evidence buffer containing high-fidelity,
source-tagged observations from processed documents; $K_t$ is the
consolidated semantic store containing schemas, templates, and
generalized abstractions extracted from repeated episodes; $\Gamma_t$
is the associative relation graph connecting episodic and semantic
content through structural, temporal, and analogical edges;
$\mathcal{V}_t$ is the temporal validity index, which records for
each stored claim the interval $[t_{\text{start}},\, t_{\text{end}}]$
during which it is authoritative; $P_t$ is the provenance ledger,
which links every stored conclusion through its consolidation history
to the source-document bounding boxes from which it was derived~\cite{buneman2001why,cheney2009provenance,moreau2013prov}; and
$A_t$ is the authorization and policy state governing who may read,
write, update, or delete each component. The graph $\Gamma_t$ and
validity index $\mathcal{V}_t$ are state-side structures, distinct
from the document-level graph $G_t$ and visual tensors $V_t$ in
Equation~\eqref{eq:doctuple}.

Equation~\eqref{eq:statetuple} is a specification rather than a
commitment to a particular implementation. The six components may be
realized by a knowledge graph, a hybrid database, a neural memory
module, or a combination of substrates. The defining requirement is
that the representation preserve the six roles and that the state
transitions below act on them jointly.

\begin{algorithm}[t]
\caption{Per-Session Persistent State Transition}
\label{alg:lifecycle}
\begin{algorithmic}[1]
\Require Prior state $S_{t-1}$ (with $S_0 = \emptyset$ for the
first session); document $D_t$; query $q_t$; optional deletion
request $X_{\mathrm{del}}$
\Ensure Updated state $S_t$; auditable answer $y_t$
\State \textbf{(Acquire)} $o_t \leftarrow \textsc{Observe}(D_t)$
    \Comment{extract $(V_t,T_t,B_t,G_t,H_t,M_t)$}
\State $c_t \leftarrow \textsc{Corroborate}(o_t,\,P_{t-1})$
    \Comment{prior supporting evidence}
\State \textbf{(Validate)} $e_t \leftarrow
    \textsc{Validate}(D_t,o_t,c_t,U_t)$
\If{$e_t \neq \emptyset$} \Comment{pre-write gate passed}
    \State \textbf{(Write)} $E_t \leftarrow E_{t-1}\cup\{(e_t,\tau_t,
        \mathrm{prov}(e_t))\}$
    \State \textbf{(Consolidate)} $(K_t,\Gamma_t)\leftarrow
        \textsc{Consolidate}(E_t,K_{t-1},\Gamma_{t-1})$
        \Comment{$\max_\alpha[\mathcal{J}(\alpha)-\lambda\mathcal{R}(\alpha)]$}
    \State \textbf{(Reconcile)} $(K_t,\mathcal{V}_t)\leftarrow
        \textsc{Reconcile}(e_t,K_t,\mathcal{V}_{t-1})$
    \State $P_t \leftarrow P_{t-1}\cup\mathrm{prov}(e_t)$
        \Comment{provenance invariant}
\Else
    \State $S_t \leftarrow S_{t-1}$ \Comment{state unchanged}
\EndIf
\State $A_t \leftarrow A_{t-1}$
    \Comment{authorization inherited; changed only by policy events}
\If{deletion request $X_{\mathrm{del}}$ received}
    \State \textbf{(Forget)} $S_t \leftarrow
        \mathcal{U}_{\mathrm{unl}}(S_t,\, X_{\mathrm{del}})$
        \Comment{event-driven; Algorithm~\ref{alg:forget}}
\EndIf
\State $y_t \leftarrow \textsc{Answer}(q_t,D_t,S_t)$
    \Comment{conditioned on persistent state}
\State \Return $S_t,\,y_t$
\end{algorithmic}
\end{algorithm}
\subsection{State Transitions and Lifecycle Operations}
\label{sec:transitions}

State evolves between sessions, and the transition determines what enters persistent state. A naive formulation writes the generated output directly into state: $S_t = g(S_{t-1}, D_t, \hat{y}_t)$, where $g$ is an arbitrary state-update function. This is unsafe because $\hat{y}_t$ may be incorrect: a hallucinated field value, a misread table cell, or an unsupported inference can become a durable state error contaminating later sessions — structural and self-reinforcing, beyond a single response. We require a pre-write validation gate, yielding the three-stage transition:

\begin{align}
o_t &= \operatorname{Observe}(D_t)
 \label{eq:observe}\\
c_t &= \operatorname{Corroborate}(o_t,\, P_{t-1})
\label{eq:corroborate}\\
e_t &= \operatorname{Validate}(D_t,\, o_t,\, c_t,\, U_t)
\label{eq:validate}\\
S_t &= \Phi(S_{t-1},\, e_t,\, \Omega_t)
\label{eq:update}
\end{align}
where $o_t$ is the document-complete multimodal observation extracted from $D_t$ — text, layout coordinates, table structures, and visual elements; it is document-conditioned, not query-conditioned, so state must capture full content for queries not yet posed at session $t$. Query $q_t$ conditions the \emph{answer} in the answer step of Algorithm~\ref{alg:lifecycle}, not the \emph{observation}; $e_t$ is validated evidence admitted by the gate after corroboration against the prior provenance ledger $P_{t-1}$ and uncertainty $U_t$; and $\Phi$ applies the operator set $\Omega_t = \{\mathcal{S}_{\mathrm{sup}},\, \mathcal{C}_{\mathrm{sch}}\} \cup \begin{cases} \{\mathcal{U}_{\mathrm{unl}}\} & \text{if deletion request } X_{\mathrm{del}} \text{ received}\\ \emptyset & \text{otherwise} \end{cases}$ to the prior state and admitted evidence. The corroboration context $c_t$ is retrieved from the prior
provenance ledger $P_{t-1}$, providing supporting or
contradicting evidence for $o_t$. When validation fails,
$e_t = \emptyset$ and the state is left unchanged. The epistemic confidence $U_t \in [0,1]$ quantifies the
model's certainty in the observation $o_t$, defined as the
calibrated complement of predictive entropy over extracted
fields: $U_t = 1 - H(o_t \mid D_t) / H_{\max}$, where
$H_{\max}$ is the maximum entropy over the extraction
vocabulary ($U_t = 0$: maximally uncertain; $U_t = 1$:
certain). Observations with $U_t$ below a threshold $\tau_U$ are rejected at the validation gate. The three operators defined below determine how evidence is consolidated, conflicting claims reconciled, and information removed; validation is a state-integrity condition, not an architectural refinement.

\begin{definition}[Transformation Operators $\Omega_t$]
\label{def:operators}
Each operator in $\Omega_t$ is a partial function on the state in
Equation~\eqref{eq:statetuple}, activated by the admitted evidence
$e_t$:
\begin{itemize}
  \item \textbf{Schema consolidation}
  $\mathcal{C}_{\mathrm{sch}}: (E_t, K_{t-1}, \Gamma_{t-1})
\rightarrow (K_t, \Gamma_t)$ abstracts   episodic traces into a consolidated schema by solving the utility--compression objective (Equation~\eqref{eq:consolidate}, defined below), while preserving recursive provenance links in $P_t$.
  \item \textbf{Temporal supersession}
  $\mathcal{S}_{\mathrm{sup}}: (K_t, \mathcal{V}_{t-1}, e_t)
  \rightarrow (K_t, \mathcal{V}_t)$ resolves conflicts between $e_t$
  and the post-consolidation knowledge store $K_t$ into one of the
  five reconciliation outcomes.
  Under \emph{supersede}, the superseded claim's validity interval
  is closed at the effective time $t_c$ in $\mathcal{V}_t$, and
  $e_t$ becomes authoritative thereafter.
  \item \textbf{Deterministic unlearning}
  $\mathcal{U}_{\mathrm{unl}}: (S_t, X_{\mathrm{del}})
  \rightarrow S_t'$ removes the deletion target $X_{\mathrm{del}}$
  (a subset of state elements specified by a deletion request) across
  all affected components and repairs consolidated structures that
  depend on it, producing a repaired state $S_t'$ subject to the
  Forgetting Fidelity and Coverage Retention guarantees formalized
  in Section~\ref{sec:guarantees}.
\end{itemize}
\end{definition}

\begin{figure*}[!t]
\centering
\begin{tikzpicture}[font=\small,
  op/.style={rectangle, draw, rounded corners=2pt, minimum width=1.8cm, minimum height=0.7cm, align=center, fill=gray!12},
  st/.style={rectangle, draw, rounded corners=2pt, minimum width=2.1cm, minimum height=0.85cm, align=center, fill=orange!14},
  arr/.style={-{Stealth[length=2.2mm]}, thick},
  parr/.style={-{Stealth[length=2.2mm]}, thick, dashed, teal}]
\node[op] (acq) at (0,0) {Acquire};
\node[op] (val) at (2.3,0) {Validate};
\node[op] (wri) at (4.6,0) {Write};
\node[op] (con) at (6.9,0) {Consolidate};
\node[op] (upd) at (9.2,0) {Update};
\node[op] (for) at (11.5,0) {Forget};
\draw[arr](acq)--(val);
\draw[arr](val)--(wri);
\draw[arr](wri)--(con);
\draw[arr](con)--(upd);
\draw[arr](upd)--(for);
\node[st] (epi) at (4.6,-1.7) {Episodic\\Store $E_t$};
\node[st] (sem) at (6.9,-1.7) {Semantic\\Store $K_t$};
\node[st] (idx) at (9.2,-1.7) {Associative\\Index $\Gamma_t$};
\node[st] (tmp) at (11.5,-1.7) {Temporal\\Layer $\mathcal{V}_t$};
\node[st] (auth) at (13.8,-1.7) {Authorization\\State $A_t$};
\draw[arr](wri)--(epi);
\draw[arr](con)--(sem);
\draw[arr](con)--(idx);
\draw[arr](upd)--(tmp);
\node[st, minimum width=15.8cm, fill=teal!10] (prov) at (6.95,-3.2)
  {Provenance Ledger $P_t$ --- invariant across all operations};
\draw[parr](epi)--(prov);
\draw[parr](sem)--(prov);
\draw[parr](idx)--(prov);
\draw[parr](tmp)--(prov);
\draw[parr](auth)--(prov);
\end{tikzpicture}
\caption{The persistent document memory architecture. Six lifecycle
operations act on the six components of $S_t$
(Equation~\eqref{eq:statetuple}). Provenance ($P_t$) is an invariant
across all operations. The authorization state $A_t$ governs access
control. Forget removes the target across all stores and repairs
derived abstractions.}
\label{fig:architecture}
\end{figure*}

The operators transform state; the lifecycle governs their exercise in six operations: \emph{acquire}, \emph{validate}, \emph{write}, \emph{consolidate}, \emph{update and reconcile}, and \emph{forget}. \emph{Acquire} extracts the multimodal observation $o_t$ (text, layout, tables, visual elements). \emph{Validate} applies the gate in Equation~\eqref{eq:validate} before any persistent write. \emph{Write} appends admitted evidence $e_t$ to $E_t$, tagged with timestamp $\tau_t$ and provenance $P_t$. \emph{Consolidate} transforms episodic traces into generalized
schemas $K_t$ and relational structures $\Gamma_t$ according to the
utility--compression objective
\begin{equation}
\max_{\alpha}\;\bigl[\,\mathcal{J}(\alpha) - \lambda\,\mathcal{R}(\alpha)\,\bigr]
\label{eq:consolidate}
\end{equation}
where $\alpha$ ranges over candidate abstractions,
$\mathcal{J}(\alpha)$ is expected future-task utility,
$\mathcal{R}(\alpha)$ is storage or retrieval cost, and
$\lambda \ge 0$ controls the trade-off. The optimum need not be the smallest representation: episodic details remain valuable for tail queries when a semantic summary suffices for common ones. \emph{Update and reconcile} applies temporal supersession and one of five outcomes — \emph{support}, \emph{supersede}, \emph{scope-coexist}, \emph{retain-conflict}, \emph{escalate-for-review} — not binary overwriting: an amendment supersedes an earlier clause only after its effective date; two scientific papers may report different results for different populations; a preliminary financial statement may coexist with a later audited version because the claims carry different authority. \emph{Forget} executes verifiable removal of specified information across episodic, semantic, and index representations, including repair of abstractions that depended on it. Provenance is an invariant rather than a seventh operation: every operation preserves the chain from a stored conclusion to its source bounding boxes, through consolidation, supersession, and repair.
Algorithm~\ref{alg:lifecycle} collects these stages into a single
per-session procedure, making explicit how the validation gate, the
operator set, and the provenance invariant interact during each
state transition.

\subsection{Architecture and Degenerate Regimes}
\label{sec:architecture}
The corresponding architecture realizes the six components of
Equation~\eqref{eq:statetuple} according to complementary design
principles. Figure~\ref{fig:architecture} depicts this architecture:
six lifecycle operations act on the six state components, with
provenance maintained as an invariant across all operations.
Following complementary learning
systems~\cite{mcclelland1995there}, the episodic store ($E_t$) is
optimized for fast, high-fidelity writes, while the semantic store
($K_t$) is optimized for slow, compressed consolidation. The
associative index ($\Gamma_t$) supports retrieval by structural
analogy, temporal contiguity, and schema-driven expectation rather
than embedding similarity alone~\cite{gutierrez2024hipporag}. The
temporal layer ($\mathcal{V}_t$) distinguishes publication, event,
effective, and ingestion time~\cite{snodgrass1985taxonomy,snodgrass2012tsql2}. The provenance ledger ($P_t$) is
maintained as an immutable record, and the authorization state
($A_t$) enforces tenant and policy isolation.

These components are complementary rather than interchangeable: the episodic store preserves fidelity without generalization; the semantic store generalizes while abstracting source detail; the associative index improves access without governance; the temporal layer adds validity and ordering without encoding evidence; the provenance ledger adds traceability without retrieval utility; the authorization state adds governance without evidential content. The specification derives value from their composition.

Current Document AI systems are degenerate regimes retaining only subsets of the specification. The \emph{zero-state regime} sets $S_t = \emptyset$, omitting persistent state entirely: single-page benchmarks, long-context models, and standard Document QA reduce $P(Y \mid D_t, S_{t-1})$, where $Y$ denotes the task output, to $P(Y \mid D_t)$ because $S_{t-1}$ is absent. The \emph{monotonic append-only regime} sets $S_t = S_{t-1} \cup \operatorname{Embed}(D_t)$, appending embeddings to a vector index without consolidation, conflict resolution, or forgetting, as in standard RAG and visual-retrieval pipelines. Because $\Omega_t = \emptyset$ in Equation~\eqref{eq:update}, the index can grow without bound, retrieval interference may increase with corpus size, and no mechanism exists to revise or remove stored content. The \emph{unstructured summarization regime} sets $S_t = \operatorname{Decay}(S_{t-1}) \cup \operatorname{Summarize}(D_t)$, retaining compressed textual summaries while discarding spatial layout $B_t$, visual content $V_t$, and bounding-box provenance $P_t$; text-oriented agent-memory systems commonly approximate this regime. The full specification instead requires all six state components, the six lifecycle operations, and the provenance invariant — a conjunction no published system we identified satisfies under our operational definition.

\subsection{Formal Guarantees: Memory Harm and Governed Forgetting}
\label{sec:guarantees}

The pre-write validation gate in Equation~\eqref{eq:validate}
constitutes a formal integrity requirement rather than only an
architectural principle. The following definitions and bounds
quantify the risk of violating it.

\begin{definition}[Memory Harm]
\label{def:mh}
Let the task quality be the expected log-likelihood of the
ground-truth output $Y^*$:
\[
Q(D \mid S) \triangleq \mathbb{E}_{Y^*}\!\bigl[
\log P(Y^* \mid D, S)\bigr].
\]
Let $S_k^{\text{clean}}$ denote a contamination-free baseline
state. The Memory Harm at exposure $k$ is
\begin{equation}
\label{eq:mh}
MH(k) \triangleq Q(D_{k+1} \mid S_k^{\text{clean}})
- Q(D_{k+1} \mid S_k),
\end{equation}
where $MH(k) > 0$ indicates state-induced degradation.
\end{definition}

\begin{definition}[Binary reliance model]
\label{def:reliance}
The system is \emph{reliant} on a contaminated state item when
the item is retrieved and its content is used as if correct.
Given reliance, the task's expected log-likelihood decreases by
$D_{\mathrm{KL}}(P_{\mathrm{true}}\,\|\,P_{\mathrm{false}})$,
where $P_{\mathrm{true}}$ and $P_{\mathrm{false}}$ are the
model's output distributions conditioned on correct and corrupted
state, respectively. The weight $w_m \in [0,1]$ is the
probability that a contaminated state triggers reliance at the
evaluation point, assumed homogeneous across contamination
events for the bound below.
\end{definition}

\begin{theorem}[Contamination Accumulation and Expected Memory Harm]
\label{thm:mh}
Let $p_e \in (0,1)$ be the probability that an incoming document
observation contains an error, and let $p_c \in (0,1)$ be the
validation gate's conditional false-accept rate,
$p_c = P(\mathrm{consolidate} \mid \mathrm{erroneous})$. Assume
ingestion steps are independent, i.e., the per-step events
$\{\mathrm{error}_i \wedge \mathrm{consolidate}_i\}$ are independent
across $i \in \{1,\dots,k\}$. Then the probability that at least one
erroneous observation is consolidated within $k$ steps is
\begin{equation}
P_{\mathrm{cont}}(k) = 1 - (1 - p_e\, p_c)^k,
\label{eq:pcont}
\end{equation}
and, under the binary reliance model of
Definition~\ref{def:reliance}, the expected Memory Harm (\ie expectation over the contamination event,
with $MH(k)$ as defined in Definition~\ref{def:mh}) is
\begin{equation}
\mathbb{E}[MH(k)] = w_m \cdot P_{\mathrm{cont}}(k) \cdot
D_{\mathrm{KL}}\!\bigl(P_{\mathrm{true}} \,\|\, P_{\mathrm{false}}\bigr).
\label{eq:mhbound}
\end{equation}
\end{theorem}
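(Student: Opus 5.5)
The plan is to prove the two claims of Theorem~\ref{thm:mh} in order. First comes the contamination probability $P_{\mathrm{cont}}(k)$. Then the expected Memory Harm is obtained by conditioning on whether contamination has occurred.

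\textbf{Step 1: the per-step probability.} For step $i$, let $A_i$ be the event $\{\mathrm{error}_i \wedge \mathrm{consolidate}_i\}$. By the chain rule,
\[
P(A_i) = P(\mathrm{error}_i)\,P(\mathrm{consolidate}_i \mid \mathrm{error}_i) = p_e\,p_c .
\]
This uses only the definition of $p_c$ as the conditional false-accept rate of the validation gate in Equation~\eqref{eq:validate}. It needs no independence between the error and the gate decision.

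\textbf{Step 2: the complement argument.} The event ``at least one erroneous observation is consolidated within $k$ steps'' is the union $\bigcup_{i=1}^k A_i$. Its complement is $\bigcap_{i=1}^k A_i^{c}$. The assumed mutual independence of the $A_i$ carries over to their complements, so
\[
P\bigl(\textstyle\bigcap_i A_i^c\bigr) = (1-p_e p_c)^k .
\]
This gives Equation~\eqref{eq:pcont}.

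\textbf{Step 3: the expected Memory Harm.} Let $C$ be the contamination event, so that $P(C) = P_{\mathrm{cont}}(k)$. By the law of total expectation,
\[
\mathbb{E}[MH(k)] = P(C)\,\mathbb{E}[MH(k)\mid C] + P(C^c)\,\mathbb{E}[MH(k)\mid C^c].
\]
On $C^c$ the state coincides with a contamination-free state $S_k^{\text{clean}}$. Hence $Q(D_{k+1}\mid S_k)=Q(D_{k+1}\mid S_k^{\text{clean}})$, and the second term vanishes by Definition~\ref{def:mh}.

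On $C$, split once more on reliance, which Definition~\ref{def:reliance} says occurs with probability $w_m$, homogeneous across contamination events. Without reliance the contaminated item is not used, so the output distribution equals the clean one and the harm is zero. With reliance, the definition stipulates a drop in expected log-likelihood of exactly $D_{\mathrm{KL}}(P_{\mathrm{true}}\|P_{\mathrm{false}})$. So $\mathbb{E}[MH(k)\mid C] = w_m D_{\mathrm{KL}}$, and multiplying out gives Equation~\eqref{eq:mhbound}.

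\textbf{Main obstacle.} The difficulty is not the algebra but justifying Step 3 rigorously, in two places.

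The first is the claim that the harm is zero on $C^c$ and under non-reliance. I would make this precise by reading the reliance model as saying the output distribution depends on the state only through whether a contaminated item is relied upon. This also requires identifying $S_k^{\text{clean}}$ with the realized state on $C^c$.

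The second is that, under this model, the harm does not depend on how many contaminations occurred. The binary model collapses multiple consolidated errors into a single reliance event, which is exactly why only $P_{\mathrm{cont}}(k)$, and not an expected count, appears. I would state this explicitly as part of the binary reliance modeling assumption rather than attempt to derive it.

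I would also remark, as a sanity check, that the gate enters only through $p_c$. Since $P_{\mathrm{cont}}(k)\to 1$ as $k\to\infty$ for any $p_c>0$, the per-step accumulation rate is $p_e p_c$, and this rate is what the gate controls.
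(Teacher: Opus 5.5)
Your proposal is correct and matches the paper's proof step for step. The chain rule gives $P(\mathrm{error}_i \wedge \mathrm{consolidate}_i) = p_e p_c$ without any within-step independence, independence across steps gives the complement product $(1-p_e p_c)^k$, and the expected harm follows by averaging a realized harm that is $w_m D_{\mathrm{KL}}(P_{\mathrm{true}}\|P_{\mathrm{false}})$ under contamination and $0$ otherwise. Your extra split on reliance, and your explicit statement of the two modeling assumptions (the uncontaminated realized state can be identified with $S_k^{\text{clean}}$, and harm does not scale with the number of consolidated errors), simply unpack what the paper compresses into the clause ``under the binary reliance model the realized harm is $w_m D_{\mathrm{KL}}$ when the state is contaminated and $0$ otherwise.''
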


\begin{remark}[Gate discriminativity]
\label{rem:gate}
No independence between error status and gate decision within a
step is assumed: by the definition of conditional probability,
$P(\mathrm{error} \wedge \mathrm{consolidate}) = p_e\, p_c$ holds
for any gate, discriminative or not. A useful validation gate is
precisely one whose false-accept rate $p_c$ is far below its
true-accept rate $P(\mathrm{consolidate} \mid \mathrm{correct})$;
the theorem quantifies the residual risk carried by the false
accepts that remain.
\end{remark}

\begin{remark}[Independence vs.\ family dependence]
The independence assumption in Theorem~\ref{thm:mh} concerns
ingestion steps (the per-step error-and-accept events); within a
step, no independence between error status and gate decision is
assumed (Remark~\ref{rem:gate}). It is distinct from the
family-level dependence of task outcomes discussed in
Section~\ref{sec:statistical}, which motivates the mixed-effects
estimator for evaluation. If contamination events are positively
correlated within a document family, the true contamination
probability at fixed $k$ is lower than Equation~\eqref{eq:pcont},
so Equation~\eqref{eq:mhbound} overstates expected harm; the
independent-step model therefore yields a conservative risk
statement for correlated failure modes and an exact one for
idiosyncratic errors.
\end{remark}

\begin{proof}
For step $i$, $P(\mathrm{error}_i \wedge \mathrm{consolidate}_i)
= P(\mathrm{error}_i)\, P(\mathrm{consolidate}_i \mid
\mathrm{error}_i) = p_e\, p_c$ by the definition of conditional
probability; no within-step independence is used. Independence
across steps then gives $P(\text{clean after } k) =
(1 - p_e\, p_c)^k$, yielding Equation~\eqref{eq:pcont}. Under the
binary reliance model the realized harm is $w_m\,
D_{\mathrm{KL}}(P_{\mathrm{true}} \| P_{\mathrm{false}})$ when the
state is contaminated and $0$ otherwise; taking the expectation
over the contamination event, which occurs with probability
$P_{\mathrm{cont}}(k)$, yields Equation~\eqref{eq:mhbound}.
\end{proof}

\begin{table*}[!t]
\centering
\caption{Candidate substrates for the six components of $S_t$.
Cells record whether the substrate natively provides each property,
under the coding rules stated in Section~\ref{sec:substrates}.
Columns: \emph{Persist} (survives sessions), \emph{Inspect}
(human-readable), \emph{Relate} (relational structure),
\emph{Generalize} (abstraction), \emph{Delete} (verifiable removal),
\emph{Ground} (bounding-box traceability). No single substrate
dominates; hybrid composition is required.}
\label{tab:substrates}
\small
\begin{tabular}{@{}lcccccc@{}}
\toprule
\textbf{Substrate} & Persist & Inspect & Relate & Generalize & Delete & Ground \\
\midrule
Vector store & $\checkmark$ & $\checkmark$ & Partial & Partial & $\checkmark$ & Partial \\
Knowledge graph & $\checkmark$ & $\checkmark$ & $\checkmark$ & Partial & $\checkmark$ & $\checkmark$ \\
Document database & $\checkmark$ & $\checkmark$ & $\checkmark$ & -- & $\checkmark$ & $\checkmark$ \\
Parametric update & $\checkmark$ & -- & Implicit & $\checkmark$ & -- & -- \\
Episodic memory & $\checkmark$ & $\checkmark$ & Partial & -- & $\checkmark$ & $\checkmark$ \\
Hybrid & $\checkmark$ & $\checkmark$ & $\checkmark$ & $\checkmark$ & $\checkmark$ & $\checkmark$ \\
\bottomrule
\end{tabular}
\end{table*}
\begin{corollary}[Validation Gate Controls Contamination Accumulation]
\label{cor:mh}
With $p_c$ the conditional false-accept rate of
Theorem~\ref{thm:mh} (inherited, not redefined), a perfect gate
($p_c = 0$) gives $P_{\mathrm{cont}}(k) = 0$ and hence
$\mathbb{E}[MH(k)] = 0$ for all $k$. For any fixed $p_c > 0$,
$P_{\mathrm{cont}}(k) \to 1$ as $k \to \infty$: the gate controls
the accumulation rate, not the asymptote. For a target
contamination level $\delta \in (0,1)$, the state satisfies
$P_{\mathrm{cont}}(k) \le \delta$ over the horizon
$k \le \ln(1-\delta)/\ln(1 - p_e\, p_c)$. This guarantee is
conditional on the supplied $p_c$: the corollary neither derives
nor certifies the false-accept rate of any concrete gate, which
must be measured empirically, e.g., by controlled corruption
injection as specified in Section~\ref{sec:contamination}.
\end{corollary}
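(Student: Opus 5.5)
The corollary follows from the closed form in Theorem~\ref{thm:mh}. I will verify its three claims in order, writing $r \triangleq p_e\, p_c$ throughout. By the hypotheses of Theorem~\ref{thm:mh}, $p_e \in (0,1)$ and $p_c \in [0,1)$. The theorem assumes $p_c \in (0,1)$, but Equation~\eqref{eq:pcont} remains valid at $p_c = 0$, since its derivation only uses $P(\mathrm{error}_i \wedge \mathrm{consolidate}_i) = p_e p_c$. Hence $r \in [0,1)$ and $1 - r \in (0,1]$.

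\emph{Perfect gate.} Take $p_c = 0$. Then $r = 0$, so $P_{\mathrm{cont}}(k) = 1 - 1^k = 0$ for every $k$. Substituting into Equation~\eqref{eq:mhbound} gives $\mathbb{E}[MH(k)] = w_m \cdot 0 \cdot D_{\mathrm{KL}} = 0$. This needs the KL term to be finite. I will note this explicitly as an implicit assumption of the binary reliance model in Definition~\ref{def:reliance}: if $D_{\mathrm{KL}} = \infty$, the product $0 \cdot \infty$ must be read as $0$, because under the model harm is realized only on the contamination event.

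\emph{Asymptote.} Fix $p_c > 0$. Then $r \in (0,1)$, so $0 < 1 - r < 1$ and $(1-r)^k \to 0$ geometrically. Therefore $P_{\mathrm{cont}}(k) \to 1$, and the function is monotonically increasing in $k$. The rate is governed by $-\ln(1-r)$, which is increasing in $p_c$. This is what supports the phrasing ``controls the rate, not the asymptote.''

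\emph{Horizon.} For $\delta \in (0,1)$ and $r \in (0,1)$, I will solve $1 - (1-r)^k \le \delta$. This is equivalent to $(1-r)^k \ge 1 - \delta$, and taking logarithms gives $k \ln(1-r) \ge \ln(1-\delta)$. The only delicate point is the sign when dividing: $\ln(1-r) < 0$, so the inequality flips, giving $k \le \ln(1-\delta)/\ln(1-r)$. The right-hand side is a ratio of two negatives, hence positive. In the degenerate case $r = 0$ every $k$ qualifies, which is consistent with reading the bound as $+\infty$. The final sentence of the corollary, that $p_c$ is supplied rather than certified, is interpretive and needs no proof. The main obstacle is just this careful handling of the sign in the logarithmic inversion and of the degenerate $p_c = 0$ endpoint.
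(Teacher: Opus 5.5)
Your proposal is correct and takes the paper's route: the paper gives no separate proof, reading the corollary directly off the closed form $P_{\mathrm{cont}}(k) = 1-(1-p_e\, p_c)^k$ of Theorem~\ref{thm:mh} by substitution, a geometric limit, and a sign-careful logarithmic inversion, exactly as you do. Your explicit check that Equation~\eqref{eq:pcont} still holds at $p_c = 0$, outside the theorem's stated range $p_c \in (0,1)$, and your handling of the $r = 0$ horizon and the $0 \cdot D_{\mathrm{KL}}$ product cover boundary points the paper leaves unstated; they are worth keeping.
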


Forgetting admits a parallel formal treatment. Let $X_{\mathrm{del}}$
denote a target evidence subset requested for erasure, and let
$Y_{\mathrm{ret}}$ denote un-targeted retained evidence. Recall that
task quality is the expected log-likelihood
$Q(\cdot) = \mathbb{E}[\log P(Y^* \mid \cdot)]$, which is
non-positive and higher-is-better; Coverage Retention therefore
requires that the \emph{drop} in retained quality be bounded,
$Q(Y_{\mathrm{ret}} \mid S_t') \ge Q(Y_{\mathrm{ret}} \mid S_t) -
\varepsilon$. Define deletion leakage as the mutual information between the
target and the post-deletion state,
$\operatorname{Leak}(X_{\mathrm{del}} \mid S_t')
\triangleq I(X_{\mathrm{del}};\, S_t')$, so that
$\operatorname{Leak} = 0$ if and only if $S_t'$ is independent
of $X_{\mathrm{del}}$. In practice $I(X_{\mathrm{del}};S_t')$
is estimated by a probe-based upper bound
(Algorithm~\ref{alg:forget}, Level~5): the advantage of the best
membership- or attribute-inference adversary over chance.
Forgetting Fidelity is therefore defined relative to the probe
class. In sign-safe form we define Forgetting Fidelity and
Coverage Retention as:
\begin{align}
FF &\triangleq 1 - \frac{\operatorname{Leak}(X_{\mathrm{del}} \mid S_t')}
{\operatorname{Leak}(X_{\mathrm{del}} \mid S_t)}
\label{eq:ff}\\
CR &\triangleq \exp\!\bigl( Q(Y_{\mathrm{ret}} \mid S_t') -
Q(Y_{\mathrm{ret}} \mid S_t) \bigr)
\label{eq:cr}
\end{align}
where $S_t'$ is the state after executing the unlearning operator
$\mathcal{U}_{\mathrm{unl}}$, and $\operatorname{Leak}(X_{\mathrm{del}}
\mid S_t)$ is the pre-deletion leakage. Conventions and ranges:
$FF \le 1$ always; $FF = 1$ indicates complete removal; $FF = 0$
indicates no reduction in leakage; $FF < 0$ indicates the repair
\emph{increased} leakage and is reported as such; when
$\operatorname{Leak}(X_{\mathrm{del}} \mid S_t) = 0$ the request is
vacuous and $FF$ is reported as $1$. For Coverage Retention,
$CR \in (0,\infty)$: $CR = 1$ indicates deletion left retained
knowledge untouched; $CR < 1$ indicates retention loss; $CR > 1$
indicates the repair improved retained quality, which is
permissible and indicates the deleted content was interfering with
retained reasoning. The guarantee $CR \ge 1 - \epsilon$ therefore
bounds the per-instance log-likelihood budget at
$-\ln(1-\epsilon)$ nats. In empirical reporting, $Q$ may be
instantiated by any higher-is-better quality measure
(log-likelihood, accuracy, or ANLS): both $EG$ (a difference) and
$CR$ (an exponentiated difference) are sign-safe under any such
instantiation, while Theorem~\ref{thm:mh} requires the
log-likelihood instantiation specifically.

Suppose document $D_j$ was consolidated into a schema $\sigma = \mathcal{C}_{\text{sch}}(E_T, K_{T-1}, \Gamma_{T-1})$ where $E_T$ includes episodic traces from $D_j$. Deleting the source file and its index pointer without repairing $\sigma$ leaves residual information: $\operatorname{Leak}(D_j \mid S_t^{\text{unrepaired}}) = I(D_j;\, \sigma) > 0$, because $D_j$ contributed to $\sigma$; an adversary probing $\sigma$ can partially reconstruct attributes of $D_j$. Perfect Forgetting Fidelity ($FF = 1$) requires schema repair when a consolidated abstraction $\sigma \in K_t$ satisfies $I(X_{\mathrm{del}};\, \sigma) > 0$. If the deleted evidence exists only in $E_t$ and has not been
consolidated, Levels~1--2 suffice. If $D_j$ has been consolidated
into $\sigma$, Level~3 schema repair is required: re-consolidating
$\sigma' = \mathcal{C}_{\text{sch}}(E_T \setminus E_{D_j},
K_{T-1}, \Gamma_{T-1})$ removes the direct contribution of
$E_{D_j}$, but does not by itself guarantee $I(D_j;\,\sigma')=0$,
because $K_{T-1}$, $\Gamma_{T-1}$, and $P_T$ may each retain
information derived from $D_j$ through earlier consolidation
steps. We therefore state the sufficient condition as a
conjecture.
\begin{conjecture}[Schema-repair sufficiency]
\label{conj:schema}
If (i)~$\sigma$ is a function of $E_T$ alone,
(ii)~no other state component ($K_{T-1}$, $\Gamma_{T-1}$,
$P_T$, $A_t$) encodes information derived from $D_j$, and
(iii)~re-consolidation is deterministic and stable under removal
of $E_{D_j}$, then $I(D_j;\,\sigma')=0$ and
$Q(Y_{\mathrm{ret}}\mid S_t') \ge Q(Y_{\mathrm{ret}}\mid S_t)
- \varepsilon$.
\end{conjecture}
Conditions~(i)--(iii) do not hold for a general consolidated
store; verifying them for a specific implementation is an open
problem. Deletion in a persistent document system is therefore a five-level hierarchy: physical object removal, representation invalidation, consolidated schema repair, parametric unlearning, and certified non-recoverability. One level does not imply the next: the first two are standard database operations; the third requires repairing information already absorbed into a consolidated representation.

\subsection{Substrate Composition}
\label{sec:substrates}

The formalization leaves one implementation question: what
substrates should instantiate $S_t$? No single substrate provides
all required properties. A vector store offers efficient semantic
retrieval with limited explicit temporal and relational semantics. A
knowledge graph provides relations and provenance but can be
expensive to construct and maintain from multimodal evidence. A
symbolic document store preserves source fidelity with limited
generalization. Parameter adaptation integrates knowledge compactly
into model behavior but complicates attribution and selective
deletion. A hybrid architecture follows naturally from the state
specification: because no single substrate satisfies all six
required properties, the specification deliberately leaves the
choice of substrate open. Any combination of storage engines,
graph structures, differentiable modules, or future
representations not yet invented can be evaluated against the
specification, provided the resulting system preserves the six
roles, supports the six lifecycle operations, and maintains the
provenance invariant. The benchmark harness in the next section
evaluates implementations against those requirements without
privileging a particular substrate.

\noindent\textbf{Coding rules for Table~\ref{tab:substrates}.}
Cells record whether the substrate \emph{natively} provides the property:
``$\checkmark$'' = native support; ``Partial'' = achievable with additional engineering; ``Implicit'' = emergent behavior only; ``--'' = not achievable by design. Each row reflects the substrate's published design specification.

\section{The Statefulness Audit and Longitudinal Benchmark}
\label{sec:audit}

\begin{table*}[!t]
\centering
\caption{Statefulness audit of representative Document AI
benchmarks. ``Partial'' indicates a
neighboring capability without the full persistent-state criterion. The LoCoMo/SYNAPSE row represents adjacent-field (text-only
conversational) evaluation. The final row is the specification
from Section~\ref{sec:formalization}, not an empirical result.}
\label{tab:audit}
\small
\begin{tabular}{@{}lcccccccc@{}}
\toprule
\textbf{Benchmark} & \textbf{Sequential} & \textbf{Cross-doc}
& \textbf{Family} & \textbf{Temporal} & \textbf{Provenance}
& \textbf{Forgetting} & \textbf{Session} & \textbf{Answer.} \\
\midrule
DocVQA~\cite{mathew2021docvqa}
  & -- & -- & -- & -- & Partial & -- & -- & -- \\
MP-DocVQA~\cite{tito2023hierarchical}
  & -- & -- & -- & -- & Page & -- & -- & -- \\
DUDE~\cite{van2023document}
  & -- & -- & -- & -- & Page & -- & -- & Yes \\
LongDocURL~\cite{Deng2025LongDocURL}
  & -- & -- & -- & -- & Box & -- & -- & -- \\
M-LongDoc~\cite{chia2025m}
  & -- & Partial & -- & -- & Partial & -- & -- & -- \\
ViDoRe~\cite{Faysse2025ColPali}
  & -- & -- & -- & -- & Page & -- & Ext.\ index & -- \\
ChartQA~\cite{Masry2022ChartQA}
  & -- & -- & -- & -- & Element & -- & -- & -- \\
FUNSD~\cite{jaume2019funsd}
  & -- & -- & -- & -- & Box & -- & -- & -- \\
PubTables-1M~\cite{smock2022pubtables}
  & -- & -- & -- & -- & Cell & -- & -- & -- \\
QASPER~\cite{dasigi2021dataset}
  & -- & -- & -- & -- & Section & -- & -- & Yes \\
\midrule
LoCoMo~\cite{maharana2024evaluating}
/ SYNAPSE~\cite{jiang2026synapse}
  & Yes & -- & Partial & Partial & Text-only & -- & Yes & Yes \\
\midrule
\textbf{Proposed harness}
  & \textbf{Yes} & \textbf{Yes} & \textbf{Yes} & \textbf{Yes}
  & \textbf{Span} & \textbf{Yes} & \textbf{Yes} & \textbf{Yes} \\
\bottomrule
\end{tabular}
\end{table*}

The formalization in the previous Section~\ref{sec:formalization} defines what persistent document state must contain and what its lifecycle operations must achieve, but these definitions require an evaluation protocol to assess whether they are satisfied in practice. Such an evaluation requires two complementary instruments: an audit of existing benchmarks that characterizes the current coverage and limitations of the field, and a new benchmark harness that makes the remaining requirements measurable. We describe both.

\subsection{The Statefulness Audit}
\label{sec:audit_existing}
Under this protocol, we audited the evaluation suites that
define the Document AI landscape, drawing on a search corpus
bounded between 2017 and 16 September 2026. Each benchmark
was coded against eight operational criteria: sequential
presentation, cross-document dependency, repeated-family
benefit, temporal revision, provenance persistence,
verifiable forgetting, session persistence, and answerability.
The audited suites span the major task families that the
community has built over this period, and each was
constructed to solve a specific problem that preceded it.

DocVQA~\cite{mathew2021docvqa} standardized single-page visual reading comprehension; MP-DocVQA~\cite{tito2023hierarchical} extended the spatial scope to multipage documents; DUDE~\cite{van2023document} introduced unanswerable questions forcing abstention; LongDocURL~\cite{Deng2025LongDocURL} pushed context to $50$--$150$ pages; M-LongDoc~\cite{chia2025m} exposed evidence acquisition as a bottleneck; ViDoRe~\cite{Faysse2025ColPali} shifted retrieval evaluation to visual pages; ChartQA~\cite{Masry2022ChartQA}, FUNSD~\cite{jaume2019funsd}, PubTables-1M~\cite{smock2022pubtables}, and QASPER~\cite{dasigi2021dataset} cover chart reasoning, key information extraction, table structure, and scientific QA. Each benchmark solved a real problem the field needed solved. None asks whether the system is different after processing the document than it was before: no cross-session dependency, no temporal revision, no consolidation reward, no verifiable forgetting.

\noindent\textbf{Coding rules for Table~\ref{tab:audit}.}
Each cell records the \emph{maximum capability exercised by the benchmark protocol}: ``Yes'' = protocol requires the capability with ground-truth labels; ``--'' = neither required nor measured; ``Partial'' = neighboring capability without the full persistent-state criterion. The \textbf{Provenance} column records annotation-grounding granularity within a single episode (Page, Box, Cell, Element, Section, Span, or Text-only); all audited benchmarks score ``--'' on provenance \emph{persistence} as defined in
Section~\ref{sec:formalization}. The \textbf{Session} column records explicit
session boundaries; ``Ext.\ index'' denotes retrieval over an external corpus
without session-level state persistence. Benchmarks absent from the table
were excluded as subsumed by a listed benchmark in the same task family
or lacking a standardized protocol.

The pattern across all audited suites is unambiguous. Every benchmark evaluates within-episode perception: visual reading, layout parsing, evidence retrieval, or long-context reasoning over a fixed input set. No benchmark in the audited corpus
presents documents sequentially across explicit session boundaries, requires a future answer to depend on evidence processed in a prior session, tests whether repeated exposure to a document family improves quality or reduces cost, evaluates temporal supersession or version-aware conflict resolution, or measures verifiable forgetting. The last row of Table~\ref{tab:audit} is not an empirical result. It is the specification from Section~\ref{sec:formalization} rendered as evaluation criteria. The gap between the last row and every other row is the \emph{statelessness bottleneck}, made visible as an absence in a table.

\begin{figure*}[!t]
\centering
\begin{tikzpicture}[font=\small,
  blk/.style={rectangle, draw, rounded corners=2pt, minimum width=2.6cm, minimum height=0.85cm, align=center, fill=gray!12},
  wrm/.style={rectangle, draw, rounded corners=2pt, minimum width=2.4cm, minimum height=0.8cm, align=center, fill=orange!16},
  cld/.style={rectangle, draw, rounded corners=2pt, minimum width=2.4cm, minimum height=0.8cm, align=center, fill=blue!10},
  res/.style={rectangle, draw, rounded corners=2pt, minimum width=1.9cm, minimum height=0.7cm, align=center, fill=green!10},
  arr/.style={-{Stealth[length=2.2mm]}, thick}]
\node[blk] (str) at (0,0) {Stream $D_1\dots D_k$\\(session-bounded)};
\node[wrm] (wrm) at (3.6,1.0) {Warm state $S_k$};
\node[cld] (cld) at (3.6,-1.0) {Cold baseline\\$S_0 = \emptyset$};
\node[blk] (tsk) at (7.0,0) {Future task\\$(D_{k+1}, q_{k+1})$};
\node[res] (qw) at (10.0,1.0) {$Q_{warm}$};
\node[res] (qc) at (10.0,-1.0) {$Q_{cold}$};
\node[res, fill=red!12] (dq) at (12.8,0) {$\Delta_Q$};
\draw[arr](str)--(wrm);
\draw[arr](wrm)--(tsk); \draw[arr](cld)--(tsk);
\draw[arr](tsk)--(qw); \draw[arr](tsk)--(qc);
\draw[arr](qw)--(dq); \draw[arr](qc)--(dq);
\end{tikzpicture}
\caption{The longitudinal benchmark harness. The warm path processes
$D_1\dots D_k$ and retains $S_k$; the cold baseline starts from
$S_0=\emptyset$. Both are evaluated on the identical future task
$(D_{k+1}, q_{k+1})$. The difference $\Delta_Q$ (Experience Gain)
isolates the contribution of persistent state. The remaining four
metrics are computed from related conditions of this harness.}
\label{fig:harness}
\end{figure*}

\subsection{The Longitudinal Benchmark Harness}
\label{sec:harness}
The absence is not accidental. Because benchmarks do not reward persistence, researchers do not build it; because researchers do not build it, benchmarks do not measure it. The field is locked in a self-reinforcing equilibrium: task formulations define single-shot functions, benchmarks evaluate them, leaderboards rank them, funding follows leaderboard position. Persistence is not merely unmeasured; it is unrepresentable in the current evaluation infrastructure. Breaking the equilibrium requires a benchmark that makes persistence the independent variable and future performance the dependent variable, per Section~\ref{sec:formalization}.
The longitudinal benchmark harness is defined by eight mandatory
properties.
\begin{enumerate}
  \item \textbf{Sequential presentation:} documents arrive in
    temporal order across explicit session boundaries, and the
    system's state must persist between sessions.
  \item \textbf{Cross-document dependency:} at least one question at
    session $t$ requires evidence that appeared only in sessions
    $t' < t$, so a system with no persistent state cannot answer
    correctly.
  \item \textbf{Evolving content:} the stream includes controlled
    revisions, amendments, and supersessions, requiring the system
    to update its state rather than simply accumulate.
  \item \textbf{Consolidation opportunities:} the stream includes
    repeated instances of a document family, rewarding systems that
    extract a reusable schema over those that re-process each
    instance from scratch.
  \item \textbf{Forgetting requirements:} at designated points, a
    deletion request targets specific evidence, and the system must
    demonstrate that the target is unrecoverable while unrelated
    knowledge remains intact.
  \item \textbf{Provenance queries:} the system must be able to
    trace any stored conclusion through its consolidation chain back
    to source bounding boxes.
  \item \textbf{Session boundaries:} the harness enforces explicit
    breaks between sessions so that a long context window cannot
    substitute for persistent state; the context is cleared at each
    boundary and only the persistent store $S_t$ survives.
  \item \textbf{Abstention:} at designated sessions the stream
    includes unanswerable queries, and the system must abstain
    rather than answer from stale or contaminated state.
\end{enumerate}
The eight properties correspond one-to-one to the audit criteria of
Table~\ref{tab:audit}: sequential presentation $\to$ Sequential;
cross-document dependency $\to$ Cross-doc; consolidation
opportunities $\to$ Family; evolving content $\to$ Temporal;
provenance queries $\to$ Provenance; forgetting requirements $\to$
Forgetting; session boundaries $\to$ Session; abstention $\to$
Answerability.

The decisive experiment is counterfactual and three-armed: at
evaluation point $k+1$, under identical future documents
$D_{k+1}$ and model weights $\theta$, we compare
(i)~a \emph{warm-state} condition carrying consolidated state
$S_k$;
(ii)~a \emph{retrieval-only} condition carrying an append-only
index $R_k$ over the same stream but performing no consolidation,
conflict resolution, or forgetting (the monotonic append-only
regime of Section~\ref{sec:architecture}); and
(iii)~a \emph{cold} condition from $S_0 = \emptyset$.
Define the \emph{access gain}
$\Delta_Q^{\mathrm{acc}} = Q(D_{k+1}\mid R_k) - Q(D_{k+1}\mid S_0)$
and the \emph{consolidation gain}
$\Delta_Q^{\mathrm{cons}} = Q(D_{k+1}\mid S_k) - Q(D_{k+1}\mid R_k)$.
Experience Gain is the consolidation gain,
$EG(k) \triangleq \Delta_Q^{\mathrm{cons}}(k)$;
the access gain is reported as a control.
\begin{equation}
\Delta_Q = Q(D_{k+1} \mid S_k) - Q(D_{k+1} \mid S_0)
\label{eq:deltaq}
\end{equation}
remains the total warm--cold contrast, decomposed as
$\Delta_Q = \Delta_Q^{\mathrm{acc}} + \Delta_Q^{\mathrm{cons}}$.
This isolates what consolidated state adds beyond retrieval
access. A positive consolidation gain does not by itself prove
memory: the harness also controls for contamination, duplicated
evidence, and uses stream-only information wherever possible to
minimize overlap with model pretraining.

\subsection{Counterfactual Evaluation Metrics}
\label{sec:metrics}
The counterfactual design yields five primary metrics. \emph{Experience Gain} is the consolidation gain of
Section~\ref{sec:harness}: $EG(k) \triangleq
\Delta_Q^{\mathrm{cons}}(k)$; a system that genuinely learns from related documents should show positive experience gain for some task families, growing with $k$ before saturating. \emph{Cost Efficiency} measures compute reduction attributable to prior exposure:
\begin{equation}
CE(k) = 1 - \frac{C(D_{k+1} \mid S_k)}{C(D_{k+1} \mid S_0)}
\label{eq:ce}
\end{equation}
where $C$ denotes inference cost in tokens or FLOPs. A memory system can be valuable even when accuracy saturates if it reduces the compute required for recurring document families. \emph{Memory Harm}, defined in Section~\ref{sec:formalization}, measures performance degradation caused by contaminated or outdated state; \emph{Forgetting Fidelity} and \emph{Coverage Retention}, also defined there, measure whether deletion is genuine and preserves unrelated knowledge. Together these five metrics characterize the benefit, cost, risk, and governability of persistent document state; no single metric suffices — a system that maximizes Experience Gain but ignores Memory Harm will amplify its own errors, and one with perfect Forgetting Fidelity but destroyed Coverage Retention cannot be trusted to remember selectively. The five metrics must be reported jointly.

\subsection{Statistical Design and Contamination Control}
\label{sec:statistical}

Document streams exhibit family-level dependence and serial
correlation that violate the independence assumptions of standard
statistical tests. We recommend a mixed-effects model that
directly estimates the warm--cold quality contrast underlying
Experience Gain (Equation~\eqref{eq:deltaq}):
\begin{multline}
Q_{ijkt} = \beta_0 + \beta_1 f(k) + \beta_2 W_t
+ \beta_3 f(k)\cdot W_t \\
+ u_j + b_j f(k) + s_i + \epsilon_{ijkt}
\label{eq:mixed}
\end{multline}
where $i$ indexes the system, $j$ the document family, $k$ the
exposure level, and $t$ the session within a stream. The condition
indicator $W_t \in \{0, 1\}$ distinguishes the warm path
($W_t = 1$, system carries state $S_k$) from the cold baseline
($W_t = 0$, system begins from $S_0 = \emptyset$). The saturating
exposure function $f(k) = \log(1 + k)$ ensures that the estimated
experience effect grows with $k$ and asymptotically saturates,
consistent with the paper's claim that Experience Gain increases
before reaching a ceiling. Because Equation~\eqref{eq:mixed} contains both the main effect
$\beta_2 W_t$ and the interaction $\beta_3 f(k)\cdot W_t$, the
warm--cold contrast at exposure $k$ is $\beta_2 + \beta_3 f(k)$.
The estimated Experience Gain is therefore
$\widehat{EG}(k) = \hat{\beta}_2 + \hat{\beta}_3 f(k)$;
the interaction coefficient $\beta_3$ captures the marginal gain
per unit of log-exposure, while $\beta_2$ captures the fixed
advantage of carrying any prior state.

\begin{figure*}[!t]
\centering
\begin{tikzpicture}[font=\scriptsize,
  box/.style={rectangle, draw, rounded corners=1.5pt, minimum height=0.6cm, align=center},
  arr/.style={-{Stealth[length=1.8mm]}, thick},
  rarr/.style={-{Stealth[length=1.8mm]}, thick, red!70!black},
  ptitle/.style={font=\small\bfseries}]

\begin{scope}[shift={(0,0)}]
  \node[ptitle] at (2.5,3.0) {(a) Memory contamination};
  \node[box, minimum width=1.9cm, fill=gray!10] (obs) at (2.5,2.1)
    {Incoming $o_t$\\error w.p.\ $p_e$};
  \node[box, minimum width=1.6cm, fill=blue!8] (val) at (2.5,0.9) {Validate};
  \node[box, minimum width=1.5cm, fill=green!10] (clean) at (0.9,-0.5) {Clean\\write};
  \node[box, minimum width=1.5cm, fill=red!14] (contam) at (3.9,-0.5) {Contam.\\$S_t$};
  \draw[arr] (obs) -- (val);
  \draw[arr] (val) -- (clean);
  \draw[rarr] (val) -- (contam);
  \node[font=\tiny] at (1.5,0.35) {pass};
  \node[font=\tiny, red!70!black] at (3.4,0.35) {fail $p_c$};
  \draw[rarr, dashed] (contam.east) -- ++(0.5,0) |- (obs.east);
  \node[font=\scriptsize] at (2.5,-1.7)
    {$P_{\mathrm{cont}}(k)=1-(1-p_e p_c)^k \to 1$};
\end{scope}

\begin{scope}[shift={(6.2,0)}]
  \node[ptitle] at (2.4,3.0) {(b) Bitemporal supersession};
  \draw[arr] (0.2,-1.4) -- (4.9,-1.4) node[right, font=\tiny]{valid time};
  \fill[blue!22] (0.4,0.9) rectangle (2.9,0.5);
  \draw (0.4,0.9) rectangle (2.9,0.5);
  \node[font=\tiny] at (1.65,0.7) {$D_1$ active};
  \node[font=\tiny] at (1.65,0.2) {valid $[t_a,t_c)$};
  \fill[gray!18] (2.9,0.9) rectangle (4.7,0.5);
  \draw[dashed, gray] (2.9,0.9) rectangle (4.7,0.5);
  \node[font=\tiny, gray] at (3.8,0.7) {superseded};
  \fill[orange!28] (2.9,-0.1) rectangle (4.7,-0.5);
  \draw (2.9,-0.1) rectangle (4.7,-0.5);
  \node[font=\tiny] at (3.8,-0.3) {$D_2$ active};
  \node[font=\tiny] at (3.8,-0.8) {valid $[t_c,t_e)$};
  \draw[dashed, red!60!black] (2.9,1.2) -- (2.9,-1.1);
  \node[font=\tiny, red!60!black] at (2.9,1.35) {$t_c$};
  \node[font=\tiny, align=center] at (1.65,-0.4) {$D_1$ retained\\for audit};
  \node[font=\tiny] at (3.9,1.7) {ingest $\tau_2>\tau_1$};
\end{scope}

\begin{scope}[shift={(12.4,0)}]
  \node[ptitle] at (2.3,3.0) {(c) Unlearning hierarchy};
  \node[box, minimum width=3.4cm, fill=gray!8, text=black] (l1) at (2.3,-1.6) {L1: Physical deletion};
  \node[box, minimum width=3.4cm, fill=gray!8, text=black] (l2) at (2.3,-0.75) {L2: Representation removal};
  \node[box, minimum width=3.4cm, fill=red!6, text=black, dashed, draw=red!70!black, line width=0.8pt] (l3) at (2.3,0.1) {L3: Schema repair};
  \node[box, minimum width=3.4cm, fill=red!6, text=black, dashed, draw=red!70!black, line width=0.8pt] (l4) at (2.3,0.95) {L4: Parametric unlearning};
  \node[box, minimum width=3.4cm, fill=red!6, text=black, dashed, draw=red!70!black, line width=0.8pt] (l5) at (2.3,1.8) {L5: Certified non-recover.};
  \draw[arr] (l1.north) -- (l2.south);
  \draw[arr] (l2.north) -- (l3.south);
  \draw[arr] (l3.north) -- (l4.south);
  \draw[arr] (l4.north) -- (l5.south);
  \draw[thick, red!70!black] (4.2,-0.3) -- (4.2,2.2);
  \node[font=\tiny, red!70!black, rotate=90, anchor=north] at (4.45,0.95) {the gap};
\end{scope}

\end{tikzpicture}
\caption{The three governance challenges of persistent
evidence-grounded state. (a) Memory contamination through
unvalidated writes. (b) Bitemporal supersession of evolving
documents. (c) Five-level unlearning hierarchy; current Document
AI implements Levels 1--2 only.}
\label{fig:governance}
\end{figure*}

The random effects account for structured heterogeneity:
$u_j \sim \mathcal{N}(0, \sigma^2_{\mathrm{fam}})$ is a
family-level random intercept capturing baseline difficulty
differences across document families, and
$b_j \sim \mathcal{N}(0, \sigma^2_{\mathrm{slope}})$ is a
family-level random slope allowing different families to benefit
from exposure at different rates. System is treated as a fixed
effect $s_i$ because the harness compares a specific, finite set
of systems rather than a random sample from a population of
possible architectures. Residuals within a stream exhibit serial
correlation; we model this with a first-order autoregressive
structure
$\mathrm{Corr}(\epsilon_{t}, \epsilon_{t+1}) = \rho$
estimated jointly with the fixed effects. For binary task outcomes,
a logistic mixed model with the same structure is appropriate.
Paired bootstrap confidence intervals over independently sampled
streams provide a distribution-free alternative when the number
of streams is small.

\paragraph{Controlled corruption for Memory Harm.} Evaluating Memory Harm requires a contamination-free baseline $S_k^{\mathrm{clean}}$, built by running two parallel streams over the same document sequence: a \emph{clean stream} verifying all observations before consolidation, and a \emph{corrupted stream} replacing a controlled fraction $\rho_c$ of observations at designated sessions with plausible but incorrect values (e.g., a misread tax ID, a transposed table cell), both under identical weights and validation settings. The estimate is the quality difference at the evaluation point: $\widehat{MH}(k) = Q(D_{k+1} \mid S_k^{\mathrm{clean}}) - Q(D_{k+1} \mid S_k^{\mathrm{corrupted}})$. This construction ties the corruption injection mechanism of Section~\ref{sec:contamination} directly to Definition~\ref{def:mh}. To ensure measured Memory Harm reflects stream-induced state rather than pretraining knowledge, the harness minimizes overlap between benchmark content and pretraining corpora, records all retrieval access, and prefers synthetic document families generated for the benchmark.

\paragraph{Cumulative Cost Efficiency.}
The Cost Efficiency metric in Equation~\eqref{eq:ce} evaluates
a single future task. The economically relevant quantity, however,
is cumulative stream cost: the total compute expended over the
entire stream $D_1, \dots, D_{k+1}$ under warm versus cold
conditions. We therefore define the cumulative cost ratio:
\begin{equation}
CE_{\mathrm{cum}}(k) = 1 -
\frac{\sum_{t=1}^{k+1} C(D_t \mid S_{t-1}^{\mathrm{warm}})}
{\sum_{t=1}^{k+1} C(D_t \mid S_{t-1}^{\mathrm{cold}})}
\label{eq:ce_cum}
\end{equation}
where $C$ includes perception, retrieval, reasoning, and memory
maintenance costs. A memory system is economically justified when
$CE_{\mathrm{cum}}(k) > 0$ for the target stream length, even if
per-task Cost Efficiency at any individual session is zero.

This completes the evaluation instrument. The audit in Table~\ref{tab:audit} makes the gap visible. The harness makes the gap measurable. The five metrics make the gap falsifiable: a system that achieves positive Experience Gain, positive Cost
Efficiency, bounded Memory Harm, and high Forgetting Fidelity
with Coverage Retention under the eight mandatory properties has
demonstrated persistent evidence-grounded document state under our operational definition. A system that fails any of these tests has not. The specification does not privilege an
architecture. It privileges a behavior. The next section addresses what happens when that behavior goes wrong, when memory becomes a liability rather than an asset, and what governance is required to prevent it.

\section{Governance, Failure Modes, and Research Agenda}
\label{sec:governance}

\begin{figure}[!t]
\centering
\begin{tikzpicture}[font=\scriptsize,
  lvl/.style={rectangle, draw=black!60, rounded corners=1.5pt, minimum width=7.0cm, minimum height=0.85cm, align=left, fill=gray!8, text=black},
  gap/.style={rectangle, draw=red!70!black, rounded corners=1.5pt, minimum width=7.0cm, minimum height=0.85cm, align=left, fill=red!6, text=black, dashed, line width=0.8pt},
  arr/.style={-{Stealth[length=1.6mm]}, thick, black},
  lbl/.style={font=\tiny, gray!70!black}]

\node[gap] (l5) at (0,5.1) {\textbf{L5} Certified non-recoverability\\
{\scriptsize Audit proof of irrecoverability across all components}};
\node[gap] (l4) at (0,3.9) {\textbf{L4} Parametric unlearning\\
{\scriptsize Remove $D_j$ from model weights $\theta$}};
\node[gap] (l3) at (0,2.7) {\textbf{L3} Schema repair\\
{\scriptsize Re-consolidate $\sigma' = \mathcal{C}_{\mathrm{sch}}(\mathcal{S}\setminus\{D_j\})$}};
\node[lvl] (l2) at (0,1.5) {\textbf{L2} Representation removal\\
{\scriptsize Delete embeddings, graph nodes, index entries}};
\node[lvl] (l1) at (0,0.3) {\textbf{L1} Physical deletion\\
{\scriptsize Purge raw file $D_j$ and storage artifacts}};

\draw[arr] (l1.north) -- (l2.south);
\draw[arr] (l2.north) -- (l3.south);
\draw[arr] (l3.north) -- (l4.south);
\draw[arr] (l4.north) -- (l5.south);

\draw[thick, gray] (-3.7,-0.1) -- (-3.7,1.9);
\node[lbl, rotate=90, anchor=south] at (-3.9,0.9) {current DocAI};

\draw[thick, red!70!black] (-3.7,2.3) -- (-3.7,5.5);
\node[font=\tiny, red!70!black, rotate=90, anchor=south] at (-3.9,3.9) {unbuilt};

\draw[thick, red!70!black] (3.7,2.3) -- (3.7,5.5);
\node[font=\tiny, red!70!black, rotate=90, anchor=north] at (3.9,3.9) {the gap};

\end{tikzpicture}
\caption{Five-level hierarchy for document-aware unlearning.
Passing one level does not imply success at the next. Current
Document AI implements Levels 1--2; Levels 3--5 remain unbuilt.
Levels 1--3 are evaluated by Forgetting Fidelity ($FF$) and
Coverage Retention ($CR$); Levels 4--5 require additional
certification.}
\label{fig:unlearning}
\end{figure}

\subsection{Memory Contamination and the Experience-Cost Curve}
\label{sec:contamination}
Remembering introduces failure modes that stateless pipelines do not exhibit: a persistent system can propagate a single error across thousands of future sessions, making the error self-reinforcing. The Memory Harm metric and the contamination-accumulation bound of Theorem~\ref{thm:mh} quantify this risk; evaluating it requires benchmarks that inject controlled corruption into the document stream. The design challenge is to build memory that is robust to its own fallibility. Figure~\ref{fig:governance} summarizes the three governance challenges addressed in this section.

When memory is robust, its primary value is economic. A Document AI system that processes every recurring invoice from pixels at full resolution, re-running layout detection and structure parsing on every instance, has failed to exploit repetition; persistent state avoids this redundant work. We therefore propose an experience--cost curve decomposing inference cost into perception, retrieval, reasoning, and memory maintenance under two conditions: the \emph{warm} condition, in which the system has processed a stream $D_1, \dots, D_k$ and retains persistent state $S_k$, and the \emph{cold} condition, in which the system begins from $S_0 = \emptyset$ and does not process the stream (Figure~\ref{fig:harness}). A memory-enabled system should progressively shift from instance-level perception toward schema-conditioned processing, where the consolidated template acts as a top-down prior accelerating layout parsing and focusing attention on deviations. The relevant comparison is not per-document latency but cumulative stream compute: total cost warm versus cold. A memory system can be valuable even with non-zero per-document overhead, provided cumulative savings from schema reuse exceed the cost of maintaining state.

\subsection{Recursive Provenance and Bitemporal Supersession}
\label{sec:provenance}
Schema-conditioned processing introduces a tension between compression and auditability: a schema abstracts away the details that regulated domains must recover when a conclusion is challenged. Provenance must therefore survive abstraction, preserving the recursive chain $\sigma_2 \rightsquigarrow \sigma_1 \rightsquigarrow \{e_1, e_2, e_3\} \rightsquigarrow \{D_i, \ell_i, v_i\}$~\cite{buneman2001why,cheney2009provenance,moreau2013prov}. This recursive provenance is the precondition for trust: a system that cannot point to the exact table cell and footnote that justify a consolidated claim cannot be deployed in an audit.

\begin{algorithm}[t]
\caption{Five-Level Verifiable Forgetting}
\label{alg:forget}
\begin{algorithmic}[1]
\Require State $S_t$; deletion request $X_{\mathrm{del}}$
\Ensure Repaired state $S_t'$; audit certificate bounding residual leakage
\State \textbf{L1 Physical:} $\textsc{DeleteRaw}(X_{\mathrm{del}})$
    \Comment{PDFs, page images, OCR cache}
\State \textbf{L2 Representation:} $\textsc{RemoveEmb}(X_{\mathrm{del}})$;
    $\textsc{RemoveNodes}(\Gamma_t,X_{\mathrm{del}})$
\State \textbf{L3 Schema repair:}
\For{each schema $\sigma \in K_t$ with $I(X_{\mathrm{del}};\sigma)>0$}
\State $\sigma'\leftarrow\mathcal{C}_{\mathrm{sch}}(E_T \setminus E_{X_{\mathrm{del}}},\, K_{T-1},\, \Gamma_{T-1})$
    \State replace $\sigma$ by $\sigma'$ in $K_t$; repair $P_t$ links
\EndFor
\State \textbf{L4 Parametric:} $\textsc{PruneWeights}(\theta,X_{\mathrm{del}})$
\State \textbf{L5 Certify:} $\textsc{AuditLeakage}(X_{\mathrm{del}}, S_t')$
    \Comment{probe-based upper bound on $\operatorname{Leak}$}    
\State \Return $S_t'$, certificate
\end{algorithmic}
\end{algorithm}

\subsection{Verifiable Forgetting and the Unlearning Hierarchy}
\label{sec:unlearning}
Trust also requires the ability to revoke it. When a data subject
exercises the right to erasure under Article~17 of the
GDPR~\cite{regulation2016regulation}, or when protected health information is subject
to disposal under the privacy and security safeguards of
HIPAA~\cite{annas2003hipaa}, the system must execute verifiable forgetting across the entire state. We formalize this as a five-level  unlearning hierarchy (Figure~\ref{fig:unlearning}). Level~1 is physical object
deletion, purging the raw document file. Level~2 is representation
removal, invalidating the corresponding embeddings or graph nodes.
Level~3 is consolidated schema repair, re-abstracting the semantic
store to remove any information that depended on the deleted evidence.
Level~4 is parametric unlearning~\cite{cao2015towards,bourtoule2021machine,nguyen2025survey}, pruning the incorporated facts from model weights. The
broader machine-unlearning literature provides exact unlearning via
retraining~\cite{kang2024machine}, approximate unlearning via
influence-based updates~\cite{he2026trace}, and certified unlearning
via residual checks~\cite{sehgal2026forgetting}; adapting these to
the document-native schema-repair setting of Level~3 is an open
algorithmic problem. Level~5 is certified non-recoverability, generating a
cryptographic or statistical audit proving that the deleted information
cannot be reconstructed. Algorithm~\ref{alg:forget} specifies this
procedure: each level addresses a distinct persistence mechanism, and
the certificate at Level~5 attests that Forgetting Fidelity has been
satisfied. Current Document AI systems operate almost exclusively at
Levels~1 and~2. The formalization in
Section~\ref{sec:formalization} showed that skipping Level~3 leaves
non-zero information leakage in the semantic store. Building Level~3
repair is therefore an engineering and algorithmic imperative. 

\subsection{Enterprise Deployment Constraints}
\label{sec:deployment}
Robust memory, cumulative efficiency, recursive provenance, and verifiable unlearning map directly to operational bottlenecks in the industries that stand to gain most from Document AI. Legal discovery review teams process millions of documents over months, where the inability to accumulate findings across sessions multiplies cost while inability to track non-monotonic contract amendments adds unacceptable risk~\cite{cormack2014evaluation}. Healthcare relies on longitudinal patient models to detect contradictions between successive laboratory reports and medication histories~\cite{rajkomar2018scalable}; a system that resets at every encounter forces manual reconciliation. Financial auditing requires an unbroken provenance chain from a consolidated financial statement back to individual transaction receipts~\cite{hasan2022artificial}; a system that abstracts without retaining the chain fails the audit. Scientific literature synthesis requires tracking how consensus evolves, identifying when a new paper supersedes an older finding rather than merely adding to it~\cite{marshall2016robotreviewer}. In every domain, Document AI's value proposition depends on accumulation, yet the current literature provides only episodic competence.

Bridging this gap requires systems engineered for deployment constraints. Enterprise document streams arrive at high throughput, so persistent memory must remain efficient as accumulated information grows; consolidation is expensive and need not occur in real time, creating a trade-off between consistency, availability, and cost~\cite{vogels2009eventually} --- some applications may tolerate eventually consistent memory while others require updates to become effective immediately. Multi-tenant isolation is critical: a law firm's document memory must enforce strict boundaries between matters, clients, and attorneys. Governance must extend beyond tenant isolation because different parts of the same document carry different permissions --- clauses may be visible to a department while pricing or negotiated terms remain restricted to finance or leadership --- and policy-aware governed memory must preserve these restrictions in what the system retains and uses~\cite{hu2013guide}. Persistence also creates security risks: malicious or corrupted information entering memory may influence many later sessions rather than a single interaction~\cite{gao2023retrieval}. Retention and data-residency requirements must extend from source documents to the information remembered from them~\cite{regulation2016regulation,annas2003hipaa}, while maintaining state must remain justified by cumulative benefit. A memory architecture that cannot keep what it remembers consistent, secure, and governed across users and time cannot leave the laboratory.

\subsection{An Implementation-Neutral Research Agenda}
\label{sec:agenda}
These deployment realities define an implementation-neutral research
agenda. The field must address six open problems. First,
provenance-preserving consolidation: how to abstract across
heterogeneous visual observations into compact schemas without
destroying the evidentiary chain to source bounding boxes. Second,
adaptive evidence acquisition: formulating optimal stopping policies
where an agentic system halts visual page inspection when the expected
information gain falls below the compute cost. Third,
memory-conditioned top-down parsing: demonstrating empirically that
warm-state parsers leverage learned spatial schemas to accelerate
layout detection on recurring document families. Fourth, bitemporal
state and non-monotonic supersession: modeling the distinct axes of
publication time, event time, effective time, and ingestion time to
resolve document versioning conflicts without discarding historical
audit trails. Fifth, certified machine unlearning: algorithms for
repairing consolidated knowledge graphs and parametric priors upon
receiving data erasure requests. Sixth, policy-aware governed memory:
integrating multi-tenant access control policies directly into
late-interaction retrieval and the state representation itself. These
questions are deliberately decoupled from any specific substrate. A
graph database, a differentiable memory module, a hybrid
vector-symbolic architecture, or a future substrate not yet invented
can be evaluated against them.

\subsection{Limitations}
\label{sec:limitations}
Although the proposed formalization is structurally agnostic to
language, script, and document era, our literature survey is
scoped to the dominant published corpora, which are biased toward
English-language, Latin-script, modern documents. Document AI for
low-resource languages and historical manuscripts is therefore
underrepresented in our review, though the framework itself is
directly applicable to them. The cognitive science literature is
used here to generate computational hypotheses, not to establish
that biological mechanisms are optimal engineering solutions; the
mapping between cortical consolidation and schema formation is an
architectural analogy, not a biological claim. The proposed state
tuple, lifecycle operations, and benchmark harness are formal
specifications, not validated implementations; we have not
empirically demonstrated that a system satisfying all these
properties can be built efficiently at scale. The strongest
methodological limitation of the proposed harness is the
difficulty of obtaining genuinely novel document streams that are
absent from a base model's training data. A credible longitudinal
benchmark must report its contamination controls and should
include private or synthetically generated document families. The
underlying principles: validated persistence, provenance-linked
consolidation, temporal supersession, and governed forgetting, are
domain-agnostic and may inform persistent-state design in adjacent
fields such as clinical decision support, legal reasoning, and
scientific knowledge management.

\section{Conclusion}
\label{sec:conclusion}

This survey began with a paradox: the most capable document
systems ever built share a property with the simplest ones.
They have no memory. We traced the field's evolution from
handcrafted pipelines to layout-aware pretraining, from
OCR-free generation to visual retrieval, from single-page
question answering to long-document evidence acquisition, and
showed that at every stage the unit of competence expanded
while the unit of persistence remained fixed at a single
session. We named the resulting deficit the statelessness
bottleneck and demonstrated that it persists through
parameter scaling, context extension, and retrieval
augmentation, because these mechanisms address working-memory
capacity and information access but not knowledge
accumulation. Storage is not retrieval, and retrieval is not
consolidation.

We formalized persistent evidence-grounded document state as a
six-component tuple with lifecycle operations and invariants; we
derived a contamination-accumulation theorem showing that
unvalidated writes drive state corruption toward certainty over
long horizons, and that a validation gate bounds the accumulation
rate; and we specified five counterfactual metrics that make the
presence of persistent state falsifiable. This survey provides the
specification, the audit, and the evaluation protocol; the
implementations that satisfy them remain the field's open task.
These contributions are formal specifications, not validated
implementations: whether a system satisfying all six conjuncts can
be built efficiently at scale is an open empirical question.

The decisive experiment for the next era of Document AI is a
controlled counterfactual: the same base model, the same
future documents, and different valid prior histories,
resulting in different future competence. A system that
demonstrates positive Experience Gain, positive Cost
Efficiency, bounded Memory Harm, high Forgetting Fidelity,
and Coverage Retention under the eight mandatory properties
of the harness has crossed the boundary from passive access
to active consolidation. A system that fails any of these tests has
not. The field does not need a larger model to cross this
boundary. It needs a different architecture, a different task formulation, and a different evaluation protocol.

Document AI has industrialized reading. The systems built
over the past decade can perceive, parse, retrieve, and
reason over visually rich documents with a fluency that would
have seemed implausible a generation ago. But reading is an
episodic act, and the institutions that produce and consume
documents require systems that accumulate, reconcile, and
govern knowledge across time. The field has learned to read
documents. It must now learn to remember them. The vocabulary,
the formalism, and the evaluation protocol are in place. What
remains is the engineering, the experimentation, and the
benchmarks that will prove the transition. We hope this survey
provides the ground on which that work is built.

\bibliographystyle{IEEEtran}
\bibliography{custom}

\begin{IEEEbiography}
[{\includegraphics[width=1in,height=1.25in,clip,keepaspectratio]{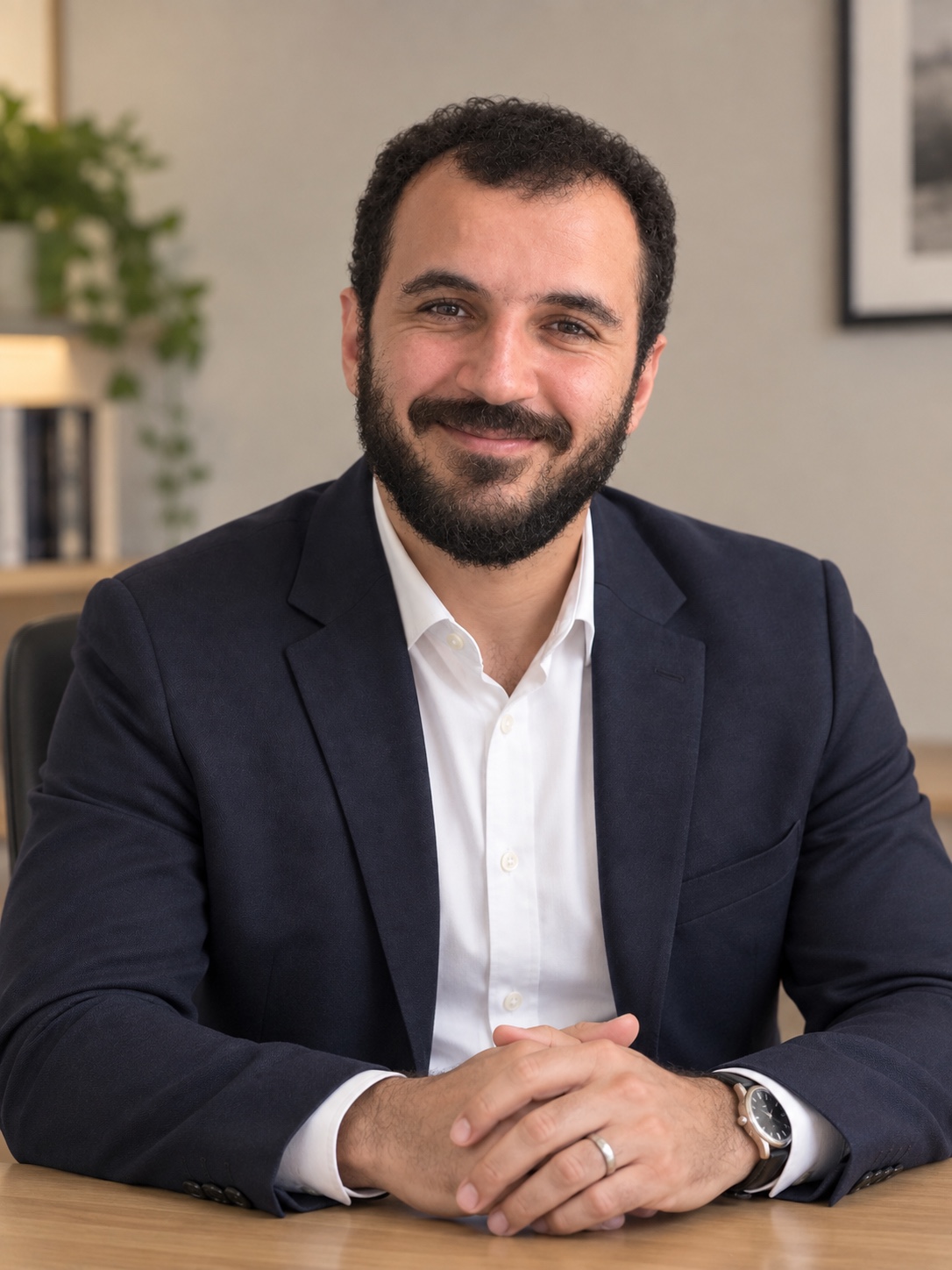}}]
{Souhail Bakkali} is an Associate Professor of Computer Science at Université de Rennes, France, and a researcher at the IRISA laboratory. He received his Ph.D. in Computer Science and Applications from La Rochelle Université, France, in 2022. His research interests include multimodal representation learning, vision-language models, and Document AI, with a particular focus on understanding and reasoning over complex visual and textual information. He supervises undergraduate and graduate students in natural language processing, computer vision, and multimodal learning. He serves as a reviewer for journals and conferences in pattern recognition, computer vision, and machine learning, and contributed to the organization of the Document Analysis Systems (DAS) workshop in 2023. He received an Outstanding Reviewer Award at ICDAR 2025.
\end{IEEEbiography}

\begin{IEEEbiography}
[{\includegraphics[width=1in,height=1.25in,clip,keepaspectratio]{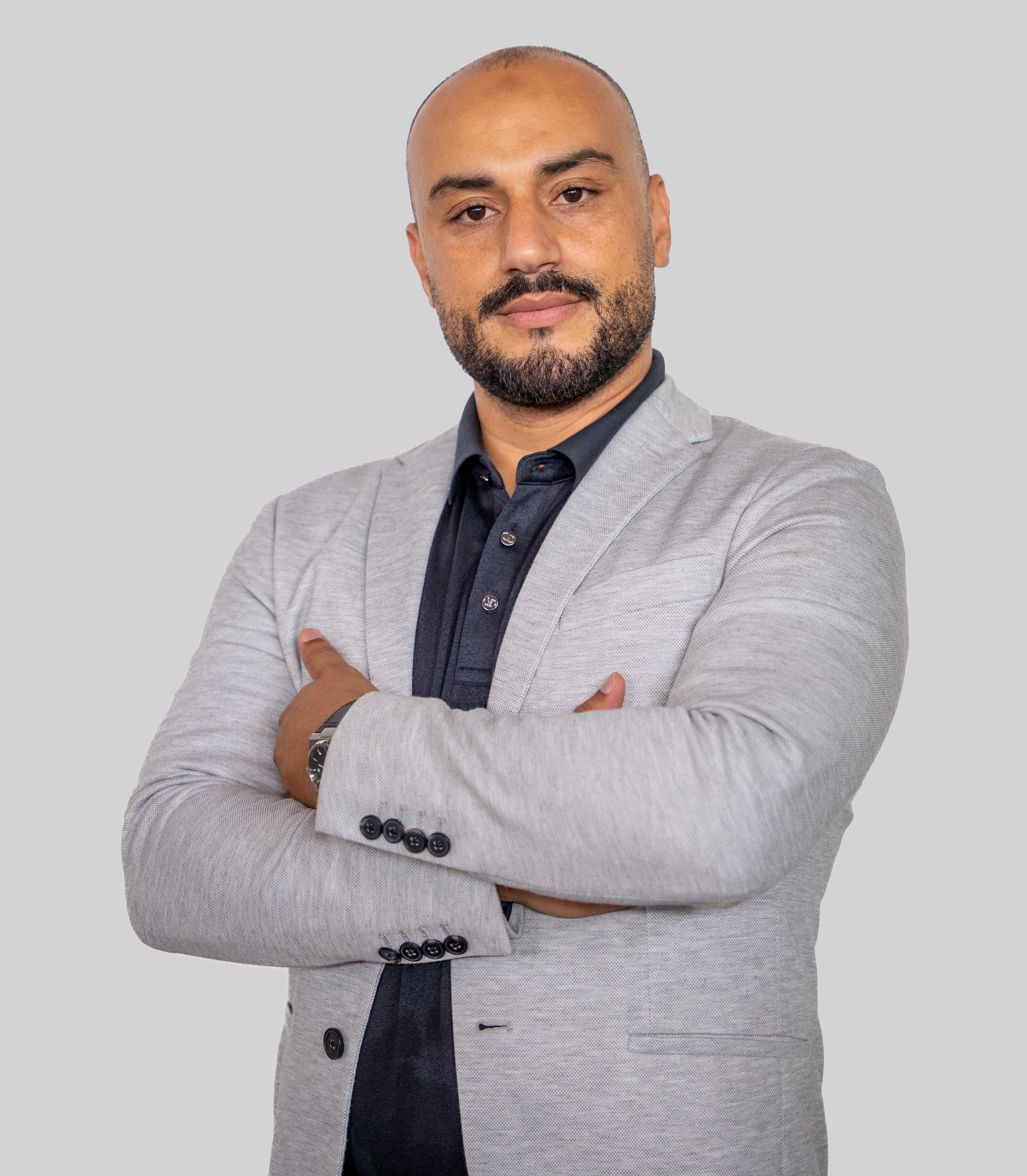}}]
{Ayoub Merimi}
is a Ph.D. student at Hassan First University, ENSA Berrechid, Morocco. His research lies at the intersection of Decision AI and Document AI, with a particular focus on persistent-state architectures, memory-enabled systems, and longitudinal reasoning over evolving information. He is also a software engineering leader and architect with more than 15 years of industry experience in large-scale information retrieval, distributed systems, personalization, and high-availability platforms. His career includes senior technical and leadership roles at Amazon, ABYAT, Amadeus, and Schlumberger, spanning global-scale software systems and AI-driven platforms. His research brings together Decision AI, persistent Document AI, and his experience with large-scale systems, with a focus on intelligent systems that can reason over evolving information across time and operate reliably at scale.
\end{IEEEbiography}

\end{document}